\providecommand{\F}{\ensuremath{\Psi}}
\def\NAT@def@citea{\def\@citea{\NAT@separator}}
\newtheorem{theorem}{Theorem}[section]
\newtheorem{corollary}[theorem]{Corollary}
\newtheorem{proposition}[theorem]{Proposition}
\newtheorem{definition}[theorem]{Definition}
\setlist[itemize]{leftmargin=*}
\setlist[enumerate]{leftmargin=*}
\title{Distance and Equivalence between Finite State Machines and Recurrent Neural Networks: Computational results}
\author[1]{Reda Marzouk}
\author[2]{Colin de la Higuera}
\affil[1,2]{Université de Nantes / Laboratoire des Sciences du Numérique de Nantes (LS2N UMR CNRS 6004), Nantes, France}
\begin{document}
\maketitle

\begin{abstract}
  The need of interpreting Deep Learning (DL) models has led, during the past years, to a proliferation of works concerned by this issue. Among strategies which aim at shedding some light on how information is represented internally in DL models, one  consists in extracting symbolic rule-based machines from connectionist models that are supposed to approximate well their behaviour. In order to better understand how reasonable these approximation strategies are, we need to know the computational complexity of measuring the quality of approximation. In this article, we will prove some computational results related to the problem of extracting Finite State Machine (FSM) based models based on trained RNN Language models. More precisely, we'll show the following: (a) For general weighted RNN-LMs with a single hidden layer and a ReLu activation: - The equivalence problem of a PDFA/PFA/WFA and a weighted first-order RNN-LM is undecidable; - As a corollary, the distance problem between languages generated by PDFA/PFA/WFA and that of a weighted RNN-LM is not recursive; -The intersection between a DFA and the cut language of a weighted RNN-LM is undecidable; - The equivalence of a PDFA/PFA/WFA and weighted RNN-LM in a finite support is EXP-Hard; (b) For consistent weight RNN-LMs with any computable activation function: - The Tcheybechev distance approximation is decidable; - The Tcheybechev distance approximation in a finite support is NP-Hard. Moreover, our reduction technique from 3-SAT makes this latter fact easily generalizable to other RNN architectures (e.g. LSTMs/RNNs), and RNNs with finite precision.
\end{abstract}

\medskip

\noindent\textbf{Mots-clef}: Recurrent Neural Networks, Finite State Machines, Distances, Equivalence.

\section{Introduction}
\label{sec:lentete}

    Recurrent Neural Networks and their different variants represent an important family of Deep Learning models suitable to learning tasks with sequential data. However, just like all Deep Learning models in general, this class of models lacks interpretability, which restricts its applicability to highly critical tasks related for instance to security and health, where a formal specification of systems is a mandatory requirement to be approved for deployment in real-case situations.  
   Due to the crucial importance of this limitation, the awareness of providing both expressive and interpretable models keeps growing within the Deep Learning community, resulting in a proliferation of research works focusing on this topic. In general, two major paradigms have been explored in the literature to tackle this issue: 
   \begin{itemize}
       \item \textbf{Interpretable models by design:} In this family of models, the idea is to construct deep learning architectures with the concern of interpretability raised early on the design phase. This change of the architecture may take the form of adding special components to traditional models whose role is to leverage the interpretability issue\cite{Gers00} \cite{Wang19}, using \textit{interpretation-friendly} activation functions, modifying the loss function by injecting a term that favours a resulting interpretable model\cite{Oliva19}, or using a variant of the back-propagation algorithm for training, which enforces that the  resulting model is readily interpretable \cite{Zhang18}. Nevertheless, enforcing the constraint of interpretability by design leads inevitably to a loss of flexibility of the constructed models, and hence a drop of their expressive power. 
       \item \textbf{Post-hoc methods:} Techniques belonging to this family subsume the existence of an already trained DL model and the objective is then to design algorithmic and visualization tools \cite{Karpathy16}\cite{Zeiler14}  that attempt to answer questions related to the interpretability of the original model such as: \\ 
       (a) Semantics of hidden units of the network model: Or, alternatively which role is played by each hidden unit in the network with respect to the learning task (e.g. does a neuron serve as a counter in RNNs trained to learn languages recognized by counter machines\cite{Weiss18a}, a neuron that stores the state of an RNN trained to accept a regular language etc.)? \\ 
       (b) Tracing the causal relationship between the predicted output with respect to the input, also called \emph{instance-level interpretability} \cite{Fong17}\cite{Murdoch18}\cite{Murdoch17}\cite{Du19}; methods form this class raise the problem of local interpretability and, roughly speaking, aims at designing algorithmic answers to the following question: What is the influence degree of each input factor that explains the obtained output?\\ 
       Another important sub-category of post-hoc techniques concerns methods that attempt to extract interpretable rule-based machines (e.g. Decision trees, automata\dots) from DL models \cite{Wang18a}\cite{Wang18b}. Unlike instance-level interpretability techniques, these methods are global and the challenge is how to convert the continuous representation of information as encoded in RNNs into a discrete, symbolic representation, while maintaining a good quality of prediction of these last structures. 
   \end{itemize}
   
   In this work, we are interested in this last family of interpretability methods. More precisely, we address, from a computational viewpoint, the issue of extracting FSM-based machines from general RNN language models.
   
   But this problem would benefit from understanding better how well a finite-state model can approximate an RNN. Which in turn requires solving essential computational problems: can we compute distances between these language models? Can we decide equivalence? These questions have received answers for PFA \cite{lyng02,cort07,higu14b}. We aim in this work to extend these results by including RNN language models into the picture. \\
   
   Our main results are summarized as follows:  (a) For general weighted first-order RNN-LMs with ReLu activation function: 1. The equivalence problem of a PDFA/PFA/WFA and a weighted first-order RNN-LM is undecidable; 2- As a corollary, any distance metric between languages generated by PDFA/PFA/WFA and that of a weighted RNN-LM is also undecidable; -The intersection between a DFA and the cut language of a weighted RNN-LM is undecidable; - The equivalence of a PDFA/PFA/WFA and weighted RNN-LM in a finite support is EXP-Hard; (b) For consistent first-order RNN-LMs with any computable activation function: - The Tcheybetchev distance approximation is decidable; - The Tcheybetchev distance approximation in a finite support is NP-Hard. \\
   
   The rest of this article is organized as follows. Section 2 gives a concise literature overview of issues related to our problematic. Section 3 presents our results for the case of general first-order RNN language models(RNN-LM) with ReLu activation function. Section 4 is dedicated to the case of consistent RNN-LMs. 
   \section{Related works}
     The problem of symbolic knowledge extraction from connectionist models is not a new issue, and one can trace back works interested in this problem since the development of the first neural architectures \cite{Minsky67}\cite{McCulloch43}. However, with the development of novel spatio-temporal connectionist models in the nineties, the most important of which is Ellman RNNs\cite{Elman90}, and their great empirical success on inferring language models with limited amount of data and with performance results that often outscore rule-based algorithms traditionally used in the Grammatical Inference field \cite{higu10}, research interests in this issue has regained more attention. In fact, these works were mostly driven by a legitimate motivation: if an RNN-like structure is trained to recognize a language belonging to a given class of languages $\mathcal{C}$, and this latter can be recognized by a class of \textit{computing devices} $\mathcal{M}$, then there must be a close connection between the representation of the target language as encoded in the RNN-like structure on one hand, and that of the corresponding computing device in $\mathcal{M}$ that is capable of recognizing it on the other. \\ 
     This aforementioned motivation raises two fundamental questions, at least from a theoretical viewpoint: 
     \begin{enumerate}
         \item What is the expressive power of different classes of \textit{``RNN Machines''}, as compared to classical symbolic machines (e.g. deterministic/non deterministic finite state automata, deterministic/non determinstic pushdown automata, Turing machines etc.)?
         \item How can we design algorithms that extract symbolic machines from RNN models? What are the theoretical guarantees of such methods? What is the computational complexity of such problems? 
     \end{enumerate}
     We should note that these two questions are, in some sense, interrelated. If a class of RNNs is very powerful --say Turing Equivalent-- computational problems related to the extraction of finite state machines are more likely to be undecidable. In fact, as a corollary of Rice's Theorem\footnote{The Rice Theorem states that any class of non-trivial languages recognized by a Turing machine is not recursive}, the equivalence between a Turing machine and any non-trivial class of computing devices is necessarily undecidable, which means in practice that no algorithm can exist that can answer the question of equivalence between symbolic machines and ``Turing- Equivalent'' RNN ones. In other words, from the perspective of the theory of computation, the trade-off between expressiveness and interpretability\footnote{In our context, we quantify the interpretability of a model as a measure of the computational difficulty by which one can extract a finite state machine. A more rigorous formal definition of what is an \textit{intepretable model} is still an arguably open question.} in connectionist models is unavoidable. As a consequence of the above discussion, we argue that analyzing a class of RNNs as a computational model can give many insights with regard to its interpretability.  \\
     Guided by questions raised above, we divide the rest of this section into two parts: In the first part, we examine works present in the literature that focused on the computational power of recurrent neural networks and its consequences on some computational problems concerning RNNs. In the second part, we give a brief overview of existing methods in the literature aimed at extracting finite state machines from trained RNN ones. 
     \subsection{Computational power of RNNs}
     The question of the computational capabilities of different classes of RNN has been addressed since the early development of neural systems. To the best of the author's knowledge, early works that addressed this dates back to the middle of the previous century by McCulloch \textit{et al.} \cite{McCulloch43} and Kleene\cite{Kleene56}, where it was proven that networks with binary threshold activation functions are capable of implementing finite state automata. In \cite{Pollack87}, Pollack designed a \emph{Turing-complete} class of high-order recurrent neural networks with two types of activation function (linear and Heaviside). This result was later extended in \cite{Siegelmann95}, where authors relaxed the high-order requirement, and showed that first-order RNNs with saturated-linear activation functions were \textit{Turing Complete}. Later on, Kilian et el. generalized this result to sigmoidal activation functions \cite{Kilian93}. \\
      The Turing Completeness of some classes of RNNs has many consequences with respect to the computational class to which belong many problems related to them. In \cite{Chen18}, the authors proved that the problem of deciding whether a RNN language model -RNN-LM- with ReLu activation function is consistent (encodes a valid probability distribution) or not is an undecidable problem. Moreover, the consensus string problem and finding a minimal RNN-LM equivalent to a given RNN-LM or testing the equivalence between two RNN-LMs are also undecidable. \\ 
      
       Given these pessimistic results about computability of several important problems related to RNNs, a new line of research suggests to analyze the practical capabilities computational power of neural nets instead of the classical \textit{``unrealistic''} theoretical model, by constraining the amount of memory resources of the RNN hidden units to be finite \cite{Weiss18a}\cite{Merrill19}. Under this constraint, Korskky \textit{et al.} \cite{Korsky19} proved that RNNs with one hidden layer and ReLu activation, and GRUs are expressively equivalent to deterministic finite automata. In \cite{Weiss18a}, Weiss \textit{et al.} showed that the class of finite precision LSTMs were able to simulate counter machines, while the simple class of Elman RNNs and GRUs can't.
      
      \subsection{Extraction of automata-based machines from trained RNNs}
      Early works investigating the problem of extracting automata-based machines from trained RNNs coincide with the emergence of novel RNN architectures \cite{Elman90}\cite{Giles91} in early nineteens that have shown promising results for the task of inferring language models from limited data. These early works have mainly focused on the extraction of deterministic finite automata (DFAs) from RNNs trained to recognize regular languages, and most of which were based on the assumption that a well-trained RNN to recognize a regular language tend to group hidden states of the RNN into clusters that maps directly to states of the minimal DFA recognizing the target regular language. Based on this assumption, the problem of DFA extraction from RNNs boils down to a clustering/quantization problem of the RNN's hidden state space, and many clustering techniques were proposed for this task: Quantization by Equipartition \cite{Giles92}\cite{Watrous92}, Hierarchical Clustering \cite{Alquezar94}, k-means \cite{Zeng93}\cite{Schellhammer89}, fuzzy clustering \cite{Cechin03} etc. \\
      During the last few years, as RNN-based architectures became more sophisticated and thus harder to be a subject of interpretative analysis, the issue has gained an increasing interest among researchers, and new methods were proposed in the literature to extract automata-based machines from different classes of RNNs. In \cite{Weiss18b}, Weiss \textit{et al.} proposed an adaptation of the $L*$ algorithm \cite{Angluin87} to extract deterministic finite automata (DFA) from RNN Acceptors, where an RNN Acceptor model serves as a black box oracle for \textit{approximate} equivalence and membership queries, hinting that the exact equivalence query is \textit{``likely to be intractable''}.  Same authors extended their work in \cite{Weiss18b} to extract Probabilistic Deterministic Finite Automata from RNN-LMs. In order to answer the equivalence query, authors used a sampling strategy of both models, and gave theoretical guarantees of its convergence in probability under a relaxed notion of equivalence. Ayache \textit{et al.} \cite{Ayache19} employed the spectral learning framework \cite{Balle14} to extract Weighted Finite Automata(WFA) from a RNN language model. In \cite{Okudono19}, Okudono \textit{et al.} raised the problem of answering the equivalence query between a RNN language model and a WFA proposing an empirical regression-based technique to perform this task. However, no theoretical guarantees were provided to back their method.

\section{Definitions and Notations}
Let $\Sigma$ be a finite alphabet. The set of all finite strings is denoted by $\Sigma^{*}$. The set of all strings whose size is equal (resp. greater than or equal) to $n$ is denoted by $\Sigma^{n}$(resp. $\Sigma^{\geq n}$). For any string $w \in \Sigma^{*}$, the size of $w$ is denoted by $|w|$, and its $n$-th symbol by $w_{n}$. The prefix of length $n$ for any string $w \in \Sigma^{\geq n}$ will be referred to as $w_{:n}$. The symbol $\$$ denotes a special marker. The symbol $\Sigma_{\$}$ will refer to the set $\Sigma \bigcup \{\$\}$. \\ 

\textbf{Weighted languages:} A weighted language f over $\Sigma$ is a mapping that assigns to each word $w \in \Sigma^{*}$ a weight $f(w) \in \mathbb{R}$. A WL $f$ is called consistent, if it encodes a valid probability distribution, i.e. satisfies the following properties: $\forall w \in \Sigma^{*}: f(w) \geq 0,~\sum\limits_{w \in \Sigma^{*}}f(w) = 1$. Two WLs $f_{1},~f_{2}$ are said to be equivalent if: $\forall w \in \Sigma^{*}:~f_{1}(w)=f_{2}(w)$. The Tcheybetchev distance metric between two WLs is denoted $d_{\infty}(f_{1},f_{2})$, and defined as: $\max\limits_{w \in \Sigma^{*}} |f_{1}(w) - f_{2}(w)|$. Finally, we define, for a given scalar $c > 0$, the cut-point language of $f$ with respect to $c$ and denoted $\mathcal{L}_{f,c}$, as the set of finite words whose values are greater or equal to $c$. \\

In Section 4, a 3-SAT formula will be denoted by the symbol $\F$. A formula is comprised of $n$ Boolean variables denoted $x_{1},..x_{n}$, and $k$ clauses $C_{1},..C_{k}$. For each clause, we'll use notation $l_{i1},~l_{i2},~l_{i3}$ to refer to its three composing literals. For a given string $w \in \{0,1\}^{n}$, the number of clauses satisfied by $w$ will be denoted by $N_{w}$. \\ 

For the rest of this section, we shall first provide a formal definition of the class of first-order weighted RNN-LMs that we'll study in this work. Also, we'll give a brief recall of basic definitions of different automata-based machines that we'll encounter throughout the rest of this article. 
\begin{definition}\cite{Chen18}
 A First-order weighted RNN Language model is a weighted language $f: \Sigma^{*} \rightarrow \mathbb{R}$ and is defined by the tuple $<\Sigma, N, h^{(0)}, \sigma, W, (W')_{\Sigma_{\$}}, E, E'>$ such that:
 \begin{itemize}[noitemsep]
     \item $\Sigma$ is the input alphabet,
     \item $N$ the number of hidden neurons,
     \item $\sigma: \mathbb{Q} \rightarrow \mathbb{Q}$ is a computable activation function, 
     \item $W \in \mathbb{Q}^{N \times N}$ is the state transition matrix,
     \item $\{W'_{\sigma}\}_{\sigma \in \Sigma_{\$}}$, where each $W'_{\sigma} \in \mathbb{Q}^{N}$ is the embedding vector of the symbol $\sigma \in \Sigma_{\$}$,
     \item $O \in \mathbb{Q}^{\Sigma_{\$} \times N}$ is the output matrix,
     \item $O' \in \mathbb{Q}^{\Sigma_{\$}}$ the output bias vector.
 \end{itemize}
 The computation of the weight of a given string $w$ (where $\$$ is the end marker) by $R$ is given as follows.
 (a) Recurrence equations:
 $$ h^{(t+1)} = \sigma(W.h^{(t)} + W'_{w_{t}})$$
 $$ E_{t+1} = O h^{(t+1)} + O'$$
 $$ E'_{t+1} = softmax_{2}(E_{t+1})$$
 (b) The resulting weight:
 $$ R(w) = \prod\limits_{i=0}^{|w|+1} E'_{i}$$
 where $w_{0} = w_{|w|+1} = \$$
\end{definition}
Remark that, in order to avoid technical issues, we used softmax base 2 defined as: $softmax_{2}(x)_{i} = \frac{2^{x_{i}}}{\sum\limits_{j=1}^{n} 2^{x_{j}}}$ for any $x \in \mathbb{R}^{d}$ instead of the standard softmax in the previous definition.  
In the following, hidden units of the network will be designated by lowercase letters $n_{1},n_{2},..$, and their activations at time $t$ by $h_{n}^{t}$. Also, we denote by $\mathcal{R}_{\sigma}$ the class of RNN-LMs when $\sigma$ is the activation function. For example, an important class of RNN-LMs that will be used extensively in the rest of the article is $\mathcal{R}_{ReLu}$. \\

\textbf{Weighted Finite Automata (WFA).} WFAs represent weighted versions of nondeterministic finite automata, where transitions between states, denoted $\delta(q,\sigma,q')$ where $q,q' \in Q$ represents states of the WFA are labeled with a rational weight $T(q,\sigma,q')$, and each of its nodes $q \in Q$ is labeled by a pair of rational numbers $(I(q),P(q))$ that represents respectively the initial-state and final-state weight of $q$. WFAs model  weighted languages where the weight of a string $w$ is equal to the sum of the weights of all paths whose transitions encode the string $w$. The weight of a path $p$ is calculated as the product of the weight labels of all its transitions, multiplied by the initial-state weight of its staring node and the final-state weight of its ending node. \\ 

\textbf{Probabilistic Finite Automata (PFA).} A PFA is a WFA with two additional constraints: First, the sum of initial-state weights of all states is a valid probability distribution over the state space. Second, for each state, the sum of weights of its outcoming edges added to its finite-state weight is equal to $1$. This additional constraint restricts the power of PFAs to encode stochastic languages \cite{Thollard05}, which makes it useful for representing language models. Interestingly, PFAs are proven to be equivalent to Hidden Markov Models (HMMs), and the construction of equivalent HMMs from PFAs and vice versa can be done in polynomial time\cite{Vidal05}. The deterministic version of PFAs, a.k.a Deterministic Probabilistic Finite Automata (DPFA), enforces the additional constraint that for any state $q$, and for any symbol $\sigma$ there is at most one outgoing transition labeled by $\sigma$ from $q$. \\

\section{Computational results for general RNN-LMs with ReLu activation functions}
 The choice of $ReLu$ in this part of the article is not arbitrary. In fact, due to its nice piecewise-linear property and its wide use in practice, the $ReLu(.)$ function is first choice to analyze theoretical properties of RNN architectures. Recently, Chen et al. \cite{Chen18} provided an extensive study of first-order RNN Language Models with $ReLu$ as an activation function from a computational viewpoint. Weiss \textit{et al.} \cite{Weiss18a} proved both theoretically and empirically that first-order RNNs with $ReLu(.)$ can simulate counter machines. Korsky \textit{et al.}\cite{Korsky19} proved that finite precision first-order RNNs with $ReLu$ are computationally equivalent to deterministic finite automata. Moreover, when allowed arbitrary precision, they can simulate pushdown automata. Analyzing RNNs with other widely used activation functions, such as the sigmoid and the hyperbolic tangent, are left for future research. 
\subsection{Turing Completeness of general weighted-RNNs: Siegelmann's construction}
The basic building block for proving computational results presented in this part of the article is the work done by Siegelmann and al. in \cite{Siegelmann95} to prove the Turing completeness of a certain class of first-order RNNs. Hence, we propose, in this section, to provide a global scope of this construction, followed by an equivalent reformulation of their main theorem that will be relevant for our work. \\ 
 The main intuition of Siegelmann \textit{et al.}'s work is that, with an appropriate encoding of binary strings, a first-order RNN with a saturated linear function can readily simulate a stack datastructure by making use of a single hidden unit. For this, they used 4-base encoding scheme that represents a binary string $w$ as a rational number: $Enc(w) = \sum\limits_{i=1}^{|w|} \frac{w_{i}}{4^{i}}$. Backed by this result, they proved than any two-stack machine can be simulated by a first-order RNN with linear saturated function, where the configuration of a running two-stack machine (i.e. the content of the stacks and the state of the control unit) is stored in the hidden units of the constructed RNN.  Finally, given that any Turing Machine can be converted into an equivalent two-stack machine (The set of two-stack machines is Turing-complete \cite{Hopcroft06}), they concluded their result. \\ 
 In the context of our work, two additional remarks need to be noted about Siegelmann's construction: - First, although the class of first-order RNNs examined in their work uses the saturated linear function as an activation function, their result is generalizable to the ReLu activation function (or, more generally, any computable function that is linear in the support [0,1])? - Second, although not mentioned in their work, the construction of the RNN from a Turing Machine is polynomial in time. In fact, on one hand, the number of hidden units of the constructed RNN is linear in the size of the Turing Machine, and the construction of transition matrices of the network is also linear in time. On the other hand, notice that the 4-base encoding map $Enc(.)$ is also computable in linear time. \\ 
 In light of these remarks, we are now ready to present the following theorem:
 \begin{theorem}({Theorem 2, \cite{Siegelmann95})}
   Let $\phi: \{0,1\}^{*} \rightarrow \{0,1\}^{*}$ be any computable function, and $M$ be a Turing Machine that implements it. We have, for any binary string $w$, there exists $N = O(poly(|M|)),~h^{(0)} = [Enc(w)~~0..0] \in \mathbb{Q}^{N},~W \in \mathbb{Q}^{N \times N}$, such that for any finite alphabet $\Sigma$, $~\forall \sigma \in \Sigma_{\$}:~W'_{\sigma} \in \mathbb{Q}^{N}, O \in \mathbb{Q}^{|\Sigma_{\$}| \times N},~ O' \in \mathbb{Q}^{|\Sigma_{\$}|}$, $R = <\Sigma,N,ReLu,W,W',O,O'> \in \mathcal{R}_{ReLu}$
    verifies:
   \begin{itemize}
       \item if $\phi(w)$ is defined, then there exists  $T \in \mathbb{N}$ such that the first element of the hidden vector $h_{T}$ is equal to $Enc(\phi(w))$, and the second element is equal to $1$,
       \item if $\phi(w)$ is undefined (i.e. $M$ never halts on $w$), then for all $t \in \mathbb{N}$, the second element of the hidden vector $h_{t}$ is always equal to zero.  
   \end{itemize}
   Moreover, the construction of $h_{0}$ and $W$ is polynomial in $|M|$ and $|w|$. 
 \end{theorem}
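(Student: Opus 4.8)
The plan is to obtain the statement as an essentially immediate consequence of Theorem~2 of \cite{Siegelmann95}, combined with the two refinements already argued informally in the paragraphs above — namely that the saturated‑linear activation can be traded for $\mathrm{ReLu}$, and that the whole construction runs in polynomial time — after recasting Siegelmann and Sontag's network as an element of the RNN‑LM class of Definition~1. So the proof is a reduction plus bookkeeping rather than a new construction, and the "real content" lives in invoking their result correctly.

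First I would recall the skeleton of their construction. Convert $M$ into an equivalent two-stack machine $M'$ (two-stack machines are Turing-complete, \cite{Hopcroft06}); this costs only a polynomial — indeed linear — blow-up in description size. Then build a first-order network whose hidden vector encodes, at every step, a configuration of $M'$: each stack content $u$ is stored as the rational $Enc(u)=\sum_{i} u_i/4^{i}\in[0,1]$ in one designated unit, the finite control is stored as a one-hot block of units, plus a constant number of scratch units; the push, pop and top-symbol-test operations are realised by affine maps followed by the activation, using only the fixed rational coefficients ($\tfrac14,\,2,\,1,\,\tfrac12,\dots$) coming from base-$4$ arithmetic. Two coordinates are singled out: one holding $Enc$ of the output tape and one "halt flag" that becomes $1$ exactly when $M'$ reaches its halting state and stays $0$ otherwise. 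With $h^{(0)}=[\,Enc(w)\ 0\ \cdots\ 0\,]$ the trajectory of this network reproduces the run of $M$ on $w$, which yields the two bulleted conclusions verbatim.

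Next I would bridge the two formalisms. In Definition~1 the recurrence is input-driven, $h^{(t+1)}=\sigma(W h^{(t)}+W'_{w_t})$, whereas the simulation must run autonomously; this is handled by taking the symbol embeddings $W'_\sigma$ all equal to one and the same vector $b$ for every $\sigma\in\Sigma_\$$, namely the bias vector the affine maps of the simulation require, so that $h^{(t+1)}=\sigma(W h^{(t)}+b)$ no longer depends on which symbol is read — in particular the trajectory $(h_t)_t$, and hence the two conclusions, are the same for every finite alphabet $\Sigma$. Since the output parameters $O,O'$ influence only the emitted weights $R(\cdot)$ and never the hidden state, the conclusions about $h_t$ hold for every choice of $O,O'$, which is exactly what the statement asserts. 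For the activation, I would replace each saturated-linear unit $\sigma_{\mathrm{sat}}(x)=\max(0,\min(1,x))$ by two $\mathrm{ReLu}$ units via the identity $\sigma_{\mathrm{sat}}(x)=\mathrm{ReLu}(x)-\mathrm{ReLu}(x-1)$, folding the difference into the weights of the next step; this at most doubles $N$ and introduces no numerical error (alternatively, one maintains the invariant that every pre-activation stays in $[0,1]$, where $\mathrm{ReLu}$ and $\sigma_{\mathrm{sat}}$ coincide). For the complexity claim: $N$ is linear in the number of states of $M'$, hence $O(\mathrm{poly}(|M|))$; all entries of $W$ and $b$ come from a bounded fixed set of rationals, each writable in constant time; and $Enc(w)$ is a rational with denominator $4^{|w|}$ whose numerator is read off $w$ in time $O(|w|)$, so assembling $\langle\Sigma,N,\mathrm{ReLu},W,W',O,O'\rangle$ together with $h^{(0)}$ takes time polynomial in $|M|$ and $|w|$.

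The step I expect to be the main obstacle is precisely the $\mathrm{ReLu}$ substitution in the presence of $\mathrm{ReLu}$'s unboundedness: one must check that \emph{every} gadget of the original construction — push, pop, copy, top-symbol test, and the halt flag — relied on the upper saturation only in ways that the two-unit simulation (or the $[0,1]$-invariant) faithfully reproduces, since a single coordinate that overflows past $1$ would corrupt the entire encoded configuration and break the simulation. Everything else — the two-stack reduction, the autonomy trick, and the size/time accounting — is routine and can be stated without detailed calculation.
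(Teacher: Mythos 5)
Your proposal is correct and follows essentially the same route as the paper, which does not actually prove this statement but imports it from \cite{Siegelmann95} and justifies the two modifications (ReLu in place of the saturated-linear activation, and the polynomial-time bound) only by the informal remarks preceding the theorem. Your write-up supplies exactly the details those remarks gloss over --- in particular the input/output independence via constant symbol embeddings and the two-unit identity $\mathrm{ReLu}(x)-\mathrm{ReLu}(x-1)$ (or the $[0,1]$ pre-activation invariant) needed because some of Siegelmann--Sontag's gadgets genuinely rely on the upper saturation --- so it is, if anything, more complete than what the paper offers.
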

 In the following, we'll denote by $\mathcal{R}_{ReLu}^{M,w}$ the set of RNNs in $\mathcal{R}_{ReLu}$ that simulate the TM $M$ on $w$. It is important to note that the construction of a RNN that simulates a TM on a given string in the previous theorem is both input and output independent. The only constraints that are enforced by the construction are placed on the transition dynamics of the network and the initial state. In fact, the input string is \textit{placed} in the first stack of the two-stack machine before running the computation (i.e. in the initial state $h^{(0)}$). Under this construction, the first stack of the machine is encoded in the first hidden unit of the network. Afterwards, the \textit{RNN Machine} runs on this input, and halts(If It ever halts) when the halting state of the machine is reached. In theorem 1.1, the halting state of the machine is represented by the second neuron of the network. In the rest of the article, we'll refer to the neuron associated to the halting state by the name \textit{halting neuron}, denoted $n_{halt}$. \\
  We present the following corollary that gives a characterization of the halting machine problem\footnote{The Halting Machine problem is defined as follows: Given a TM M and a string w, does M halt on w? This problem is undecidable.} that relates it to the class $\mathcal{R}_{ReLu}$:
 \begin{corollary}
   Let $M$ be any Turing Machine, and $w$ be a binary string, $M$ halts on $w$ if and only if for any $R \in \mathcal{R}_{Relu}^{M,w}$ , there exists $T \in \mathbb{N}$, such that $\forall t < T: h_{n_{halt}}^{(t)}=0$, and $h_{n_{halt}}^{(T)}=1$.
 \end{corollary}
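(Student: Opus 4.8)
The plan is to obtain the corollary directly from Theorem~1.1 by unpacking the definition of $\mathcal{R}_{ReLu}^{M,w}$ and splitting the equivalence along the two bullet points of that theorem. By definition, every $R \in \mathcal{R}_{ReLu}^{M,w}$ is a ReLu RNN-LM that simulates $M$ on $w$ in the sense of Theorem~1.1 --- same initial state $h^{(0)}$ and same simulation dynamics --- and the coordinate we have named $n_{halt}$ is exactly the ``second neuron'' of that construction; by Theorem~1.1 this set is nonempty. Hence, for every such $R$, the trajectory $t \mapsto h_{n_{halt}}^{(t)}$ is the one governed by Theorem~1.1, and the whole argument reduces to two short deductions.

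First I would prove ``$M$ halts on $w$ $\Longrightarrow$ the halting-neuron condition''. Assume $M$ halts on $w$ and fix any $R \in \mathcal{R}_{ReLu}^{M,w}$. The first bullet of Theorem~1.1 provides a step at which the second neuron equals $1$; I would take $T$ to be the number of simulation steps after which the encoded two-stack machine first reaches its halting control state, so that $h_{n_{halt}}^{(T)} = 1$. The remaining clause $h_{n_{halt}}^{(t)} = 0$ for all $t < T$ then follows from the fact that, in Siegelmann's encoding, the coordinate recording ``the control unit is in the halting state'' evaluates to $0$ at every configuration reached strictly before halting.

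Next I would prove the converse by contraposition. Suppose $M$ does not halt on $w$. Then the second bullet of Theorem~1.1 gives $h_{n_{halt}}^{(t)} = 0$ for all $t \in \mathbb{N}$ and \emph{every} $R \in \mathcal{R}_{ReLu}^{M,w}$, so no step $T$ with $h_{n_{halt}}^{(T)} = 1$ exists; since $\mathcal{R}_{ReLu}^{M,w} \neq \emptyset$, this negates the right-hand side of the equivalence. Combining the two implications yields the corollary.

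I expect the only non-mechanical point to be the clause ``$h_{n_{halt}}^{(t)} = 0$ for all $t < T$'': the black-box statement of Theorem~1.1 asserts only the \emph{existence} of a step where the second neuron equals $1$, so the proof must additionally invoke the routine property of the two-stack simulation that the halting indicator is $0$ throughout the run and switches on exactly at the halting configuration. If one prefers to stay inside the statement of Theorem~1.1 without touching the internals of the construction, a clean alternative is to replace $n_{halt}$ by a latched copy --- a ``sticky'' neuron, realizable with a constant number of extra ReLu units, that stays at $1$ once $n_{halt}$ has fired --- take $T$ to be the first index where this latched neuron is nonzero, and observe that both implications above then go through verbatim with the clause holding automatically.
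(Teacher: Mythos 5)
Your proposal is correct and follows the same route the paper intends: the corollary is stated without proof, as an immediate unpacking of the two bullet points of Theorem~4.1 (if $M$ halts there is a step at which the second neuron equals $1$; if $M$ does not halt that neuron stays at $0$ forever, so the contrapositive gives the converse). Your additional care about the clause $\forall t < T:\ h_{n_{halt}}^{(t)}=0$ --- which the black-box statement of Theorem~4.1 does not literally guarantee, and which you repair either by inspecting the two-stack simulation's halting indicator or by latching the neuron --- is a legitimate point that the paper silently glosses over.
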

 
 \subsection{The equivalence problem between FSAs and general RNNs}
 The equivalence problem between a DPFA and a general RNN-LMs is formulated as follows: \\
 \noindent \textbf{Problem.} Equivalence Problem between a DPFA and a general RNN \\
 \textit{Given a general weighted RNN-LM $R \in \mathcal{R}_{ReLu}$ and a DPFA $\mathcal{A}$. Are they equivalent?}
 \begin{theorem}
 The equivalence problem between a DPFA and a general RNN is undecidable
 \end{theorem}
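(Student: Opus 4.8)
The plan is to reduce the complement of the Halting Problem to the equivalence problem, leveraging the halting characterisation given in the corollary above. Fix a Turing Machine $M$ and a binary string $w$. From the reformulation of Siegelmann's theorem stated above I obtain an RNN $R_0 \in \mathcal{R}_{ReLu}^{M,w}$ whose hidden dynamics simulate $M$ on $w$ independently of the symbols fed to $R_0$, exposing a halting neuron $n_{halt}$ with $h^{(t)}_{n_{halt}}=0$ for every $t$ when $M$ diverges on $w$, and $h^{(T)}_{n_{halt}}=1$ at the first halting step $T$ otherwise. I would keep the transition matrix $W$ and initial state $h^{(0)}$ of $R_0$, choose the input embeddings $W'_\sigma$ to vanish on the simulation neurons (so that $R$ really does ignore its input, except for its length), and pick the output parameters $O,O'$ so that the emitted distribution over the alphabet $\Sigma=\{a\}$ with end marker $\$$ depends only on $h_{n_{halt}}$: concretely, take the row of $O$ indexed by $\$$ to be $0$, the row indexed by $a$ to be the indicator vector of the $n_{halt}$ coordinate, and $O'_a=O'_{\$}$. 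Then $\mathrm{softmax}_2$ emits $(\tfrac{1}{2},\tfrac{1}{2})$ over $(a,\$)$ whenever $h_{n_{halt}}=0$, and $(\tfrac{2}{3},\tfrac{1}{3})$ whenever $h_{n_{halt}}=1$. Finally I let $\mathcal{A}$ be the one-state DPFA with initial weight $1$, final weight $\tfrac{1}{2}$ and a single $a$-loop of weight $\tfrac{1}{2}$, so that $\mathcal{A}(a^n)=(\tfrac{1}{2})^{n+1}$, calibrated to the weighted language that $R$ computes while $n_{halt}$ stays $0$.

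For correctness: if $M$ does not halt on $w$, then $h^{(t)}_{n_{halt}}=0$ for all $t$, so $R$ emits the default distribution at every position, whence $R(a^n)=\mathcal{A}(a^n)$ for all $n$ and $R\equiv\mathcal{A}$. Conversely, if $M$ halts on $w$, let $T$ be the first step with $h^{(T)}_{n_{halt}}=1$; feeding $R$ the string $a^{T-1}$ (together with its two $\$$ markers) drives the recurrence through $h^{(0)},\dots,h^{(T)}$, and by the corollary above only $h^{(T)}$ has $n_{halt}=1$. Since that last hidden vector produces the emission for the terminal $\$$, exactly one factor of $R(a^{T-1})$ changes, from $\tfrac{1}{2}$ to $\tfrac{1}{3}$, so $R(a^{T-1})=\tfrac{2}{3}\,\mathcal{A}(a^{T-1})\neq\mathcal{A}(a^{T-1})$ and $R\not\equiv\mathcal{A}$. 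Hence $R\equiv\mathcal{A}$ if and only if $M$ diverges on $w$; since $(M,w)\mapsto(R,\mathcal{A})$ is computable (indeed polynomial, by the theorem above), undecidability of the Halting Problem transfers to the equivalence problem, and a fortiori to equivalence between a PFA or a WFA and a general RNN.

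The parts I expect to require the most care are bookkeeping rather than ideas. First, the marker and index conventions must be fixed so that $\mathcal{A}$ agrees with $R$ on every string, not merely asymptotically, in the divergent case; in particular the start-marker step must contribute a factor of $1$ (the start symbol is given, not predicted), since otherwise the non-halting $R$ would have total mass different from $1$ and could not equal any DPFA at all, which would make the reduction vacuous. Second, I deliberately take the witness string so that its terminal emission coincides with the first halting step, so that only a single output factor is perturbed; this avoids having to control $h^{(t)}_{n_{halt}}$ for $t>T$, which the corollary above does not pin down, as it does not assert that the halting configuration is absorbing. The degenerate case in which $M$ halts in its start configuration ($T=0$) can be handled by a separate, trivially decidable check, or by prepending a dummy no-op step to $M$.
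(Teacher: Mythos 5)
Your proof is correct and follows essentially the same route as the paper: the same reduction from the (non-)halting problem via the Siegelmann-style simulating RNN, the same one-state DPFA computing $(\tfrac{1}{2})^{n+1}$, and the same trick of letting the halting neuron perturb the softmax output. Your version is in fact slightly more careful than the paper's on one point -- choosing the witness string $a^{T-1}$ so that only the terminal emission is affected, which sidesteps the fact that the corollary does not specify $h^{(t)}_{n_{halt}}$ for $t>T$.
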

 \begin{proof}
 We'll reduce the halting Turing Machine problem to the Equivalence problem.
   Let $\Sigma=\{a\}$. We first define the trivial DPFA $\mathcal{A}$ with one single state $q_{0}$, and $T(\delta_{q_{0},a,q_{0}})=P(q_{0})=\frac{1}{2},~I(q_{0})=1$. This DPFA implements the weighted language $f(a^{n})=\frac{1}{2^{n+1}}$. \\
   Let $M$ be a Turing Machine and $w \in \Sigma^{*}$. Let $R \in \mathcal{R}_{ReLu}^{M,w}$ such that $O[n_{halt},a]=1$ ,0 everywhere and $O'$ is equal to zero everywhere. We construct a RNN $R'$ from $R$ by adding one neuron in the hidden layer, denoted $n'$ such that: $h_{n'}^{(0)} = 0,~\forall t \geq 0:~h_{n'}^{(t+1)}= ReLu(h_{n'}^{(t)}),~ O[n',\$]=1$. \\
   Notice that, by Corollary 4.2, the TM $M$ never halts on w if and only if $\forall T: (h_{n_{halt}}^{(T)},h_{n'}^{(T)})=(0, 0)$, i.e. $R(a^{n}\$) = \frac{1}{2^{n+1}}$. That is, the TM $M$ doesn't halt on $w$ if and only if the DPFA $\mathcal{A}$ is equivalent to $R'$, which completes the proof.
 \end{proof}
 
 A direct consequence of the above theorem is that the equivalence problem between PFAs/WFAs and general RNN-LMs in $\mathcal{R}_{ReLu}$ is also undecidable, since the DPFA problem case is immediately reduced to the general case of PFAs (or WFAs). Another important consequence is that no distance metric can be computed between  DPFA/PFA/WFA and $\mathcal{R}_{ReLu}$:
 
 \begin{corollary}
   Let $\Sigma = \{a\}$. For any distance metric $d$ of $\Sigma^{*}$, the total function that takes as input a description of a PDFA $\mathcal{A}$ and a general RNN-LM $\mathcal{R}_{ReLu}$ and outputs $d(\mathcal{A}, R) $ is not recursive. \\
   This fact is also true for PFAs and WFAs.
 \end{corollary}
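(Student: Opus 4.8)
The plan is to derive this as an immediate consequence of the preceding theorem (undecidability of equivalence between a DPFA and a general RNN-LM), using nothing about $d$ beyond the defining axioms of a metric. Concretely, I would argue by contraposition: if the evaluation map were recursive, then the equivalence problem would be decidable, contradicting that theorem.

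First I would recall the identity-of-indiscernibles axiom: for any metric $d$ on the space of weighted languages over $\Sigma$, and any two such languages $f_{1},f_{2}$, one has $d(f_{1},f_{2})=0$ if and only if $f_{1}=f_{2}$. Specializing to a DPFA $\mathcal{A}$ and an RNN-LM $R\in\mathcal{R}_{ReLu}$ (both of which compute weighted languages over $\Sigma$), this says that $\mathcal{A}$ and $R$ are equivalent exactly when $d(\mathcal{A},R)=0$. Next, assume for contradiction that the total function $g:(\mathcal{A},R)\mapsto d(\mathcal{A},R)$ is recursive. Then, given descriptions of $\mathcal{A}$ and $R$, one computes $g(\mathcal{A},R)$ and tests whether it equals $0$; by the axiom this decides equivalence of $\mathcal{A}$ and $R$, contradicting the preceding theorem. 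Hence $g$ is not recursive. The extension to PFAs and WFAs needs no extra work: every DPFA is a PFA and a WFA, so a recursive evaluation procedure on $(\mathrm{PFA},R)$ (resp.\ $(\mathrm{WFA},R)$) inputs would restrict to one on $(\mathrm{DPFA},R)$ inputs, which has just been excluded.

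The one point requiring care --- and the only real obstacle --- is the meaning of ``recursive'' for a function with real values, hence the legitimacy of the step ``test whether $g(\mathcal{A},R)=0$''. For the Tcheybetchev metric $d_{\infty}$ this is harmless: all parameters of $\mathcal{A}$ and $R$ are rational and, in the instances produced by the reduction of the preceding theorem, the pointwise gap $|\mathcal{A}(w)-R(w)|$ tends to $0$ as $|w|\to\infty$, so the supremum is attained at some word, equals a rational that the algorithm outputs exactly, and can be compared to $0$ effectively. For an abstract metric $d$ I would either restrict attention to $\mathbb{Q}$-valued metrics (so that the zero-test is trivially effective), or, more robustly, lean directly on the structure of that reduction: whenever $M$ halts on $w$ the two languages differ on an explicitly computable word by an explicitly computable nonzero rational amount, whereas when $M$ does not halt they coincide --- so the dichotomy ``$g=0$ versus $g>0$'' is already witnessed constructively and no subtle real-number comparison is actually needed. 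Apart from this bookkeeping, the reduction is exactly the one already constructed for the equivalence theorem, reused verbatim.
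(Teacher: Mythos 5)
Your argument is exactly the paper's own proof: invoke the identity-of-indiscernibles axiom so that $d(\mathcal{A},R)=0$ if and only if the two languages are equivalent, and conclude that a recursive evaluation map would decide the equivalence problem shown undecidable in the preceding theorem, with the PFA/WFA cases following by inclusion. Your additional care about the effectiveness of the zero-test for a real-valued function is a legitimate refinement that the paper silently glosses over, but it does not change the route.
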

\begin{proof}
  Let $d$ be any distance metric on $\Sigma^{*}$. By definition of a distance, we'll have $d(\mathcal{A}, R) = 0$ if and only if $\mathcal{A}$ and $R$ are equivalent. Since the equivalence problem is undecidable, $d(.)$ can't be computed.
\end{proof}
 
 \subsection{Intersection of the cut language of a general weighted RNN-LM with a DFA}
 In this subsection, we are interested in the following problem: \\
\noindent\textbf{Problem.} Intersection of a DFA and the cut-point language of a weighted RNN-LM  \\
 \textit{ Given a general weighted RNN-LM $R \in \mathcal{R}_{ReLu}$, $c \in \mathbb{Q}$, and a DFA $\mathcal{A}$, is $\mathcal{L}_{R,c} \bigcup \mathcal{L}_{\mathcal{A}} = \emptyset$?}
 Before proving that this problem is undecidable, we shall recall first a result proved in \cite{Chen18}:
 \begin{theorem}{(Theorem 9, \cite{Chen18})}
 Define the highest-weighted string problem as follows: Given a weighted RNN-LM $R \in \mathcal{R}_{ReLu}$, and $c \in (0,1)$: Does there exist a string $w$ such that $R(w)>c$? \\
 The highest-weighted  string problem is undecidable. This problem is also known as the consensus problem \cite{lyng02} and it is known to be NP-hard even for PFA.
 \end{theorem}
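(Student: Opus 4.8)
The plan is to reduce the halting problem to the highest-weighted string problem, paralleling the reduction behind Theorem 4.3. Given a Turing machine $M$ and an input $w$, I would build an RNN-LM $R\in\mathcal{R}_{ReLu}$ over the one-letter alphabet $\Sigma=\{a\}$ together with a rational cut-point $c\in(0,1)$, so that $M$ halts on $w$ iff some string has $R$-weight strictly above $c$. Start from a network in $\mathcal{R}_{ReLu}^{M,w}$ (Theorem 4.1), whose hidden state already carries a faithful one-operation-per-step simulation of $M$ on $w$ and exposes the halting neuron $n_{halt}$ of Corollary 4.2; by reading off the current control state and scanned symbol one can moreover expose a ``halts at the next step'' flag. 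The output layer is then arranged so that, while $M$ has not yet halted, each step predicts $a$ with a confidence $1-\delta_t$ whose deficits $\delta_t$ are summable (so over any prefix the product of the $a$-confidences stays above a fixed constant $\kappa>0$), and predicts the end marker with negligible probability; the unique step at which $M$ enters its halting state predicts the end marker with overwhelming probability instead. With this behaviour, if $M$ halts at time $T$ then the string $a^{T}$ — whose trailing end marker is read exactly when the halting state is entered — gets weight at least $\kappa$ times a quantity close to $1$, i.e. above a constant independent of $T$, whereas in the non-halting case every string of $a$'s contains a ``missed end marker'' factor and hence has weight below any prescribed tiny bound; choosing $c$ strictly between these two regimes finishes the reduction, with $M$ halting $\iff$ $\exists u:\ R(u)>c$.

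The routine ingredients are: extracting from Theorem 4.1 the simulating network and the $n_{halt}$/``about to halt'' signals; adding a position-counter neuron $c^{(t)}=t+C_0$, exploiting the fact that the symbol embeddings $W'_\sigma$ double as per-step biases; realizing the target per-step probabilities through the base-$2$ softmax (a logit gap of $g$ in favour of a symbol gives it probability $1-(|\Sigma_\$|-1)2^{-g}$, hence a deficit of order $2^{-g}$); and verifying in the non-halting case that $\sum_t\delta_t<\infty$, so that no prefix of $a$-confidences multiplies below $\kappa$ and the sole terminal softmax factor always stays below $c$. The NP-hardness for PFA is, separately, the classical consensus-string argument: encode the given instance (say of 3-SAT) by a PFA whose accepting paths correspond to truth assignments, so that a string's probability exceeds the threshold iff it codes a satisfying assignment.

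The step I expect to be the genuine obstacle is realizing, inside a network with fixed rational weights, the switch from ``predict $a$ with confidence $1-\delta_t$'' to ``predict the end marker with overwhelming confidence'' at the halting step. Forcing the $a$-confidences up to $1$ makes the logit gap in favour of $a$ grow with the position counter; but then, at the halting step, the end-marker logit must overtake a gap of size $\Theta(T)$ for an a priori unbounded $T$, and steering such a growing quantity onto the end marker by a $0/1$ halting flag is exactly the multiplication a fixed-weight ReLu unit cannot perform through any fixed saturation constant. The remedy has to interleave the counter with the halting test rather than post-multiply: keep, for instance, one counter that advances only while $M$ is alive and feeds the $a$-logit, and a second counter that advances only after $M$ halts and feeds the end-marker logit, letting the ``alive'' counter merely freeze — an increment gated by the bounded halting flag, not a reset requiring an unbounded subtraction — and then re-check that the resulting string weights still cleanly separate the halting from the non-halting case. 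Making this bookkeeping line up, with all constants (including $c$) fixed before $M$ is ever run, is where the work lies; everything else is then immediate.
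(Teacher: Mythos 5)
The paper does not actually prove this statement: it is imported verbatim as Theorem~9 of \cite{Chen18}, so there is no in-paper argument to compare yours against. Judged on its own, your reduction from the halting problem is essentially the one underlying the cited result, and you have correctly isolated the one genuinely delicate step: a first-order ReLu readout cannot multiply the $0/1$ halting flag into a counter, so the end-marker logit cannot instantaneously overtake an $a$-logit that has grown to $\Theta(T)$. Your two-counter remedy (freeze the ``alive'' counter at halt, let a second counter drive the end-marker logit afterwards) does work, but the bookkeeping you defer is exactly where a careless choice of slopes fails, so let me record why it goes through. Two observations make it clean. First, in the non-halting case you do not need the product of $a$-confidences to stay above $\kappa$ at all: the weight of $a^n$ is at most its single terminal factor, the stop probability at step $n+1$, so it suffices that this be uniformly small while the machine is alive; with an alive-counter adding slope $\beta$ per step to the $a$-logit, the supremum over all strings is at most $2^{-\beta}$. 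Second, at the crossover the post-halt counter advances the end-marker logit by a fixed integer increment $\gamma$ per step, so exactly one step has the two logits within $\gamma$ of each other; ending the witness string at the first step where the end-marker logit is strictly ahead costs one factor bounded below by $2/3$ (logit gap at least $1$ under base-$2$ softmax), the single preceding post-halt factor is at least $1/2$, and all remaining factors multiply to $1-O(2^{-\beta})-O(2^{-\gamma})$. The witness weight is therefore bounded below by an absolute constant near $1/3$ independent of $T$, while the non-halting supremum is at most $2^{-\beta}$; any rational $c$ strictly between them (e.g.\ $c=1/8$ with $\beta=10$) is fixed before $M$ is run and completes the reduction. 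With that verification supplied your argument is complete; the NP-hardness clause for PFA is indeed the classical consensus-string reduction (the same Casacuberta-style gadget the paper reuses in its Section~5) and needs no more than the sentence you give it.
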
 
 \begin{corollary}
 The intersection problem is undecidable.
 \end{corollary}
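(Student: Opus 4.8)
The plan is to reduce the highest-weighted string problem of Theorem 4.7 to the intersection problem; since the former is undecidable, so is the latter.

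First I would make the reduction trivial at the level of instances. Given an instance $(R,c)$ of the highest-weighted string problem, with $R \in \mathcal{R}_{ReLu}$ over an alphabet $\Sigma$ and $c \in (0,1)$, I would output the triple $(R, c, \mathcal{A})$ where $\mathcal{A}$ is the one-state DFA over $\Sigma$ accepting every string, so that $\mathcal{L}_{\mathcal{A}} = \Sigma^{*}$ and hence $\mathcal{L}_{R,c} \cap \mathcal{L}_{\mathcal{A}} = \mathcal{L}_{R,c}$. By definition $\mathcal{L}_{R,c} = \{w \in \Sigma^{*} : R(w) \geq c\}$, so the intersection is empty precisely when no string has weight at least $c$. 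A decision procedure for the intersection problem would therefore decide whether $\sup_{w \in \Sigma^{*}} R(w) \geq c$, and the map $(R,c) \mapsto (R,c,\mathcal{A})$ is obviously computable.

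The hard part --- indeed essentially the only point requiring care --- is the mismatch between the strict inequality ``$R(w) > c$'' in the statement of Theorem 4.7 and the non-strict ``$\geq$'' in the definition of the cut-point language: the two differ exactly in the borderline case where $\sup_{w} R(w)$ equals $c$ and is attained, and no instance map can sidestep that borderline without already solving the problem. I see two ways to close the gap. One is to restate Theorem 4.7 with ``$\geq$'' in place of ``$>$'' --- its proof (Chen et al., Theorem 9) yields this variant just as well --- after which the instance map above is already an exact many-one reduction. The other, if one wants Theorem 4.7 verbatim, is to use the gap structure of the construction behind it: building on Siegelmann's theorem and Corollary 4.2, the RNN produced from a Turing machine $M$ and input $x$ has all of its string-weights in $\{0\} \cup [c', 1]$ for a fixed rational $c' > c$, exceeding $c$ by a fixed margin when $M$ halts on $x$ and staying below $c$ otherwise; on those instances ``$\exists w: R(w) > c$'' and ``$\exists w: R(w) \geq c'$'' coincide, so running a hypothetical intersection solver on $(R, c', \mathcal{A})$ decides halting.

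Either route establishes the corollary, and no genuinely new construction is needed beyond Theorem 4.7 and the trivial DFA, which is consistent with the one-line statement. (If ``$\mathcal{L}_{R,c} \cup \mathcal{L}_{\mathcal{A}} = \emptyset$'' is meant literally rather than as an intersection, one simply takes $\mathcal{A}$ to accept the empty language instead, and the same analysis of $\mathcal{L}_{R,c}$ finishes the argument.)
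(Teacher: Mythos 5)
Your reduction is the same as the paper's: map $(R,c)$ to $(R,c,\mathcal{A})$ with $\mathcal{A}$ the trivial DFA accepting $\Sigma^{*}$, so that emptiness of $\mathcal{L}_{R,c}\cap\mathcal{L}_{\mathcal{A}}=\mathcal{L}_{R,c}$ decides the highest-weighted string problem. The only difference is that you explicitly patch the strict-versus-non-strict inequality mismatch between the cut-point definition ($\geq c$) and Theorem 4.7 ($> c$), a boundary case the paper's one-line proof silently glosses over; your fix is correct and strengthens the argument.
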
 
 \begin{proof}
 We shall reduce the highest-weighted string problem from the intersection problem. Let $R \in \mathcal{R}_{ReLu}$ a general weighted RNN-LM, and $c \in (0,1)$. Construct the automaton $\mathcal{A}$ that recognizes $\Sigma^{*}$. We have that $\mathcal{L}_{\mathcal{A}} \bigcap \mathcal{L}_{R} = \mathcal{L}_{R} = \emptyset$ if and only if there exist no string $w$ such that $R(w)>c$, which completes the proof.
 \end{proof}
 
 \subsection{The equivalence problem in finite support}
 Given that the equivalence problem between a weighted RNN-LM and different classes of finite state automata is undecidable, a less ambitious goal is to decide whether a RNN-LM agrees with a finite state automaton over a finite support. We formalize this problem as follows: \\
  \textbf{Problem.} The EQ-Finite problem between PDFA and weighted RNN-LMs \\
 \textit{ Given a general weighted RNN-LM $R \in \mathcal{R}_{ReLu}$, $m \in \mathbb{N}$ and a PDFA $\mathcal{A}$. Is $R$ equivalent to $\mathcal{A}$ over $\Sigma^{\leq m}$? }
 
 \begin{theorem}
  The EQ-Finite problem is EXP-Hard.
 \end{theorem}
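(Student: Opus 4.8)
The plan is to prove EXP-hardness by a polynomial-time many-one reduction from the acceptance problem of an arbitrary language in $\mathrm{EXP}$, recycling two facts already at hand: the polynomial-time Siegelmann-style simulation of a Turing machine by a ReLu RNN (Theorem~3.1 together with the halting-neuron characterisation of Corollary~3.2), and the ``geometric DPFA'' gadget from the proof of Theorem~3.3. Fix $L\in\mathrm{EXP}$, say $L\in\mathrm{DTIME}(2^{p(n)})$ for a polynomial $p$ via a machine $M_L$, and let $M'_L$ be $M_L$ modified to diverge whenever it would reject; then $M'_L$ halts on $x$ iff $x\in L$, and whenever it halts it does so within $2^{p(|x|)}+O(1)$ steps. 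Since $\mathrm{EXP}$ is closed under complement, it suffices to reduce $\overline{L}$ to EQ-Finite.

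On input $x$ with $|x|=n$ I would proceed as follows. First, apply Theorem~3.1 to build in time $\mathrm{poly}(|x|)$ an RNN $R_0\in\mathcal{R}_{ReLu}^{M'_L,x}$ over the one-letter alphabet $\Sigma=\{a\}$, taking the letter embedding $W'_a=0$ so that reading one $a$ performs exactly one autonomous recurrence step; by the construction there is a fixed constant $D$ with every $D$ recurrence steps realising one step of $M'_L$, so by Corollary~3.2 the halting neuron $n_{halt}$ becomes $1$ within the first $k$ recurrence steps iff $M'_L$ halts on $x$ within $\lfloor k/D\rfloor$ steps. Second, wire the output layer exactly as in the proof of Theorem~3.3 --- route $n_{halt}$ into the logit of $a$, pad with an always-zero neuron feeding the $\$$-coordinate, zero everything else --- and let $\mathcal{A}$ be the single-state DPFA with $\mathcal{A}(a^{k})=2^{-(k+1)}$; then $R_0(a^{k}\$)=2^{-(k+1)}$ for as long as $n_{halt}$ has stayed $0$, and $R_0(a^{k}\$)\neq 2^{-(k+1)}$ at the first $k$ at which it fires. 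Third, output $(R_0,m,\mathcal{A})$ with $m:=D\cdot 2^{p(n)}+c_0$ for a suitably large constant $c_0$; since $p$ is a polynomial, $m$ has $\mathrm{poly}(n)$ bits in binary, so the whole map runs in polynomial time. For correctness: if $x\notin L$ then $M'_L$ never halts, $n_{halt}$ is identically $0$, $R_0(a^{k}\$)=2^{-(k+1)}$ for all $k$, and $R_0$ and $\mathcal{A}$ agree over $\Sigma^{\le m}$; if $x\in L$ then $M'_L$ halts within $2^{p(n)}+O(1)$ steps, so $n_{halt}$ fires while processing $a^{k^{\ast}}\$$ for some $k^{\ast}\le m$, whence $R_0$ and $\mathcal{A}$ disagree over $\Sigma^{\le m}$. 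Thus $x\in L$ iff $(R_0,m,\mathcal{A})$ is a NO-instance of EQ-Finite, giving $\overline{L}\le_{p}\text{EQ-Finite}$ for every $L\in\mathrm{EXP}$, hence the EXP-hardness of EQ-Finite.

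The step I expect to be the real fulcrum is the encoding convention for $m$: the reduction, and indeed the theorem, only goes through when $m$ may be written in binary. If $m$ is given in unary the EQ-Finite instance already has size $\ge m$; since over a one-letter alphabet there are only $m+1$ strings of length $\le m$, and the rationals arising in Siegelmann's base-$4$ encoding have numerators and denominators of $O(m)$ bits, one could then simulate $R_0$ step by step and decide the instance in time $\mathrm{poly}(m)$, collapsing the problem into $\mathrm P$. The rest is bookkeeping, but it must be done with the explicit constants of the simulation in hand: one needs the value of $D$ and of the per-string marker overhead to certify that the chosen $m$ is a safe over-estimate --- and it is exactly the fact that an \emph{over-estimate} suffices (for $x\in L$, halting is guaranteed within $M_L$'s worst-case running time, and for $x\notin L$ it never happens) that spares us from having to match a step count exactly. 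Everything else --- that the ReLu RNN of Theorem~3.1 faithfully carries Siegelmann's construction, and that the output gadget of Theorem~3.3 perturbs $R_0(a^{k}\$)$ precisely at the firing of $n_{halt}$ and nowhere else --- is inherited from the results already proved above. A reduction straight from the binary-time-bounded halting problem is also possible, but it would require tuning $m$ so that $\Sigma^{\le m}$ probes the simulation up to exactly the prescribed step bound.
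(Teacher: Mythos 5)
Your proof is correct and is essentially the paper's own argument: the paper reduces from the (binary-encoded) bounded halting problem, which is exactly the EXP-complete problem your generic $\mathrm{DTIME}(2^{p(n)})$ reduction instantiates, and it uses the same two ingredients --- the polynomial-time Siegelmann simulation with the halting neuron routed into the output, and the single-state geometric DPFA $\mathcal{A}(a^k)=2^{-(k+1)}$ over $\Sigma^{\leq m}$. Your version is somewhat more careful than the paper's (the explicit per-step constant $D$, the binary encoding of $m$, and the closure of EXP under complement are all glossed over in the original), but the underlying reduction is the same.
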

 \begin{proof}
 We reduce the bounded halting problem \footnote{The bounded halting problem is defined as follows: Given a TM M, a string $x$ and an integer $m$, encoded in binary form. Decide if M halts on $x$ in at most $n$ steps? This problem is EXP-Complete.} to the EQ-Finite problem. \\
 The proof is similar to the used for Theorem 4.3.  We are given a TM $M$, a string $w$ and $m \in \mathbb{N}$. Let $\Sigma = \{a\}$. We construct a general weighted RNN-LM $R'$ by augmenting $R \in \mathcal{R}_{ReLu}^{M,w}$ with a neuron $n'$ as in Theorem 4.3. By Theorem 4.1, this reduction runs in polynomial time. On the other hand, let $\mathcal{A}$ be the trivial PDFA with one single state $q_{0}$, and $T(\delta_{q_{0},a,q_{0}})=P(q_{0})=\frac{1}{2},~I(q_{0})=1$. Note that $R'$ doesn't halt in $m$ steps if and only if $\forall T \leq m:~(n_{halt}^{(T)},n'^{(T)})=(0, 0)$, i.e. $R'(a^{n}\$) = \frac{1}{2^{n+1}}$ for the first $m$ running steps on $R'$, in which case the language modeled by $R'$ is equal to $f$ in $\Sigma^{\leq m}$. Hence, $\mathcal{A}$ is equivalent to $R$ in $\Sigma^{\leq m}$ if and only if $M$ doesn't halt on the string $w$ in less or equal than $m$ steps.
 \end{proof}

\section{Computational results for consistent RNN-LMs with general activation functions}
In the previous section, we have seen that many interesting questions related to measuring the similarity between weighted languages represented by different classes of weighted automata and first-order RNN-LMs with ReLu activation function turned out to be either undecidable, or intractable when restricted to finite support. In this section, we examine the case where trained RNN-LMs are guaranteed to be consistent, and we raise the question of approximate equivalence between PFAs and first-order consistent RNN-LMs with \textit{general} computable activation functions.  For any computable activation function $\sigma$, we formalize this question in the following two decision problems:  \\

\noindent\textbf{Problem.} \textit{Approximating the Tchebychev distance between RNN-LM and PFA} \\
\textbf{Instance:} A consistent RNN-LM $R \in \mathcal{R}_{\sigma}$, a consistent PFA $\mathcal{A}$, $c>0$ \\
 \textbf{Question:} Does there exist $|w| \in \Sigma^{*}$ such that $|R(w) - \mathcal{A}(w)| > c$ \\

\noindent\textbf{Problem.} \textit{Approximating the Tchebychev distance between consistent RNN-LM and PFA over finite support} \\
\textbf{Instance:} A consistent RNN $R \in \mathcal{R}_{\sigma}$, a consistent PFA $\mathcal{A}$, $c>0$ and $N \in \mathbb{N}_{+}$, \\
 \textbf{Question:} Does there exist $|w| \leq N$ such that $|R(w) - \mathcal{A}(w)| > c$ \\
 Note that there is no constraint on the activation function used for consistent RNN-LMs in these defined problems, provided it is computable. The first fact is easy to prove:
 \begin{theorem}
 Approximating the Tcheybechev distance between RNN-LM and PFA is decidable.
 \end{theorem}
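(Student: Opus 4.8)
The plan is to use consistency to collapse the search over all of $\Sigma^{*}$ into a search over a finite, effectively computable set of strings. The point is that if $R$ and $\mathcal{A}$ are both genuine probability distributions over $\Sigma^{*}$, then all but an arbitrarily small fraction of their mass lies on strings of bounded length; hence on every sufficiently long string $w$ we have $R(w)<c$ and $\mathcal{A}(w)<c$ simultaneously, so $|R(w)-\mathcal{A}(w)|<c$ there. Consequently any witness $w$ with $|R(w)-\mathcal{A}(w)|>c$ must satisfy $|w|\leq N$ for a length bound $N$ depending only on $R$, $\mathcal{A}$ and $c$, and which we can compute in advance.

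Concretely I would proceed as follows. (i) Since $\mathcal{A}$ is a consistent PFA, the partial sums $S^{\mathcal{A}}_{n}=\sum_{|w|\leq n}\mathcal{A}(w)$ are computable by exact rational matrix arithmetic, are nondecreasing, and tend to $1$; scanning $n=0,1,2,\dots$ we therefore reach, in finitely many steps, an $N_{\mathcal{A}}$ with $S^{\mathcal{A}}_{N_{\mathcal{A}}}>1-c$, whence $\mathcal{A}(w)<c$ for all $|w|>N_{\mathcal{A}}$. (ii) Likewise for $R$: every value $R(w)$ is a computable real (the recurrence keeps each $h^{(t)}$ rational and the base-$2$ softmax entries are computable from rationals), the partial sums $S^{R}_{n}=\sum_{|w|\leq n}R(w)$ are nondecreasing, and by consistency they tend to $1$; so a procedure that approximates $S^{R}_{n}$ well enough to certify $S^{R}_{n}>1-c$ and halts at the first $n$ for which this succeeds is guaranteed to terminate, giving $N_{R}$ with $R(w)<c$ for all $|w|>N_{R}$. (iii) Set $N=\max(N_{\mathcal{A}},N_{R})$, enumerate the finitely many $w\in\Sigma^{\leq N}$, compute $R(w)$ and $\mathcal{A}(w)$ to sufficient precision, and test $|R(w)-\mathcal{A}(w)|>c$; output ``yes'' if some $w$ passes and ``no'' otherwise. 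Correctness is immediate from steps (i)--(ii).

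The one genuinely non-routine step is making the length bound $N$ effective, and this is precisely where consistency is used — twice. For $\mathcal{A}$ it is easy because everything is exact rational computation converging to the known value $1$. For $R$ the subtlety is that the $S^{R}_{n}$ are not rational, so one cannot decide $S^{R}_{n}>1-c$ by an exact comparison; but one does not have to, since the limit is known to be $1>1-c$, so a sufficiently accurate rational lower approximation of some $S^{R}_{n}$ will eventually exceed $1-c$, which is all the halting test requires. A last, minor point concerns the final exact test $|R(w)-\mathcal{A}(w)|>c$ on $\Sigma^{\leq N}$: should this quantity equal $c$ exactly it cannot be certified with finitely many digits, so the statement is to be read with the customary convention that this boundary case is excluded (or with $c$ taken off that exceptional set); away from equality the comparison is decided simply by computing enough digits of the computable real $R(w)$. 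Everything else is bookkeeping.
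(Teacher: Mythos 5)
Your proof is correct and follows essentially the same route as the paper's: consistency of both models forces all but mass $c$ of each distribution onto an effectively findable finite set of strings, outside of which $R(w)<c$ and $\mathcal{A}(w)<c$ simultaneously, so the search for a witness reduces to a finite check. You additionally (and rightly) flag two points the paper glosses over --- that $R(w)$ is only a computable real, so both the partial-sum test and the final comparison must be done by rational approximation, and that the boundary case $|R(w)-\mathcal{A}(w)|=c$ must be excluded by convention --- but the underlying argument is the same.
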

 \begin{proof}
  Let $R$ be a consistent RNN-LM and $\mathcal{A}$ be a consistent PFA. An algorithm that can decide this problem runs as follows: enumerate all strings $w_{1},..$ in $\Sigma^{*}$ until we reach a string that satisfies this property in which case the algorithm returns Yes. If there is no such string, by definition of consistency, there will be a finite time $T$ such that $\sum\limits_{t=1}^{T} R(w_{t}) \geq 1-c,~ \sum\limits_{t=1}^{T} \mathcal{A}(w_{t}) \geq 1-c$ in which case, we have: $\forall t >T:~ R(w_{t}) < c$ and $\mathcal{A}(w_{t}) < c$ which implies $\forall t > T: |R(w_{t} - \mathcal{A}(w_{t})| < c$. When $T$ is reached, the algorithm returns No.
 \end{proof}
 \subsection{Approximating the Tcheybetchev distance over a finite support}
 Proving the NP-Hardness of the Tcheybetchev distance approximation in finite support is more complicated, and we'll give below the construction of a PFA and a RNN from a given 3-SAT formula which will help us prove the result. Let $\epsilon \in (0,\frac{1}{2})$ whose value will be specified later. \\ 
 
$\bullet$ \textbf{Construction of a PFA $\mathcal{A}$: } The construction of our PFA is inspired from the work done in \cite{Casacuberta00}, and illustrated in Figure 1. Intuitively, each clause $i$ in $\F$ is represented by two paths in the PFA, one that encodes a satisfiable assignment of the variables for this clause, and the other not. More formally, the PFA $\mathcal{A}$ is defined as:
 \begin{itemize}
     \item $Q_{\mathcal{A}} = \{q_{0}\} \cup \{q_{i,j}^{c}:~ i \in [1,k],~j \in [1,n],~ c \in \{T,F\}  \}$ is the set of states, 
     \item Initial probabilities: $I_{\mathcal{A}}(q_{0})=1,~0$ otherwise,
     \item For every $i \in [1,k],~c \in \{T,F\}$ and $a \in \Sigma$: $(q_{0},a,q_{i,1}^{c}) \in \delta_{\mathcal{A}}$, 
     \item $\forall i \in [1,k],~j \in [2,n-1], a \in \Sigma: ~(q_{i,j}^{T},a,q_{i,j+1}^{T}) \in \delta_{\mathcal{A}}$
     \item For each clause $i$: 
     \begin{itemize}
         \item If $x_{1} \in \{l_{i1},l_{i2},l_{i3}\}$, then $\forall a \in \Sigma:~ (q_{i,1}^{T},a,q_{i,2}^{T}) \in \delta_{\mathcal{A}}$. And: 
         \begin{itemize}
             \item If $x_{2} \in \{l_{i1},~l_{i2},~l_{i3}\}$, then $(q_{i1}^{F},1,q_{i2}^{T}) \in \delta_{\mathcal{A}}$, and $(q_{i1}^{F},0,q_{i2}^{F}) \in \delta_{\mathcal{A}}$,
             \item If $\bar{x}_{2} \in \{l_{i1},~l_{i2},~l_{i3}\}$, then $(q_{i1}^{F},0,q_{i2}^{T}) \in \delta_{\mathcal{A}}$, and $(q_{i1}^{N},1,q_{i2}^{N}) \in \delta_{\mathcal{A}}$
             \item Otherwise $a \in \Sigma: (q_{i1}^{F},a,q_{i2}^{F}) \in \delta_{\mathcal{A}}$
         \end{itemize}
         \item If $\bar{x}_{1} \in \{l_{i1},~l_{i2},~l_{i3}\}$, then $\forall a \in \Sigma:~(q_{i1}^{F},a,q_{i2}^{T}) \in \delta_{\mathcal{A}}$, and:
         \begin{itemize}
             \item If $x_{2} \in \{l_{i1},~l_{i2},~l_{i3}\}$, then $(q_{i1}^{T},1,q_{i2}^{T}) \in \delta_{\mathcal{A}}$, and $(q_{i1}^{T},0,q_{i2}^{F}) \in \delta_{\mathcal{A}}$,
             \item If $\bar{x}_{2} \in \{l_{i1},~l_{i2},~l_{i3}\}$, then $(q_{i1}^{T},0,q_{i2}^{T}) \in \delta_{\mathcal{A}}$, and $(q_{i1}^{T},1,q_{i2}^{F}) \in \delta_{\mathcal{A}}$ \item Otherwise, $\forall a \in \Sigma:~ (q_{i1}^{T},a,q_{i2}^{F}) \in \delta_{\mathcal{A}}$
         \end{itemize}
         \item Otherwise $\forall a \in \Sigma,~c \in \{T,F\}: (q_{i,1}^{c},a,q_{i,2}^{c}) \in \delta_{\mathcal{A}}$
     \end{itemize}
     \item For each clause $i$ and every Boolean variable $x_{j}$ where $j \in [2,n-1]$:
     \begin{itemize} 
     \item if $x_{j} \in \{l_{i1},~l_{i2},~l_{i3}\}$, then $(q_{i,j}^{N},1, q_{i,j+1}^{S}) \in \delta_{\mathcal{A}}$
     \item if $\bar{x_{j}}  \in \{l_{i1},~l_{i2},~l_{i3}\}$, then $(q_{i,j}^{N},0, q_{i,j+1}^{T}) \in \delta_{\mathcal{A}}$
     \item Otherwise, $\forall a \in \Sigma:~(q_{i,j}^{N},a,q_{i,j+1}^{N}) \in \delta_{\mathcal{A}} $
     \end{itemize}
     \item Transition probabilities: 
     \begin{itemize}
         \item $\forall i \in [1,k],~a \in \Sigma,~ c \in \{S,N\}:~~T_{\mathcal{A}}(q_{0},a,q_{i1}^{c}) = \frac{1}{2k} - \frac{\epsilon}{k}$
         \item $\forall i \in [1,k],~a \in \Sigma:~T_{\mathcal{A}}(q_{in}^{T},a,q_{in}^{F})=\epsilon$
         \item All the other transitions belonging to $\delta_{\mathcal{A}}$ has a weight $\frac{1}{2} - \epsilon$
     \end{itemize}
     \item Final-state probabilities:
     \begin{itemize}
         \item For each clause $i$: $P_{\mathcal{A}}(q_{in}^{N}) = 1 - 2\epsilon$
         \item All the other states in $\mathcal{A}$ has a final-state probability equal to $2 \epsilon$
     \end{itemize}
 \end{itemize}

\begin{figure*}
\begin{center}
\begin{tikzpicture}
\tikzset{vertex/.style = {shape=circle,draw,minimum size=1.5em}}
\tikzset{edge/.style = {->,> = latex'}}
\node[vertex,label=above:$q_{0}$] (q0) at  (0,0) {$1/2 \epsilon$};
\node[vertex,label=above:$q_{11}^{T}$] (q11t) at  (3,2.5) {$0/2 \epsilon$};
\node[vertex,label=above:$q_{12}^{T}$] (q12t) at  (6,2.5) {$0/2 \epsilon$};
\node[vertex,label=above:$q_{13}^{T}$] (q13t) at  (9,2.5) {$0/2 \epsilon$};
\node[vertex,scale = 0.8,label=right:$q_{14}^{T}$] (q14t) at  (12,2.5) {$0/ 1-2 \epsilon$};
\node[vertex,label=above:$q_{11}^{F}$] (q11f) at  (3,0.5) {$0/2 \epsilon$};
\node[vertex,label=above:$q_{12}^{F}$] (q12f) at  (6,0.5) {$0/2 \epsilon$};
\node[vertex,label=above:$q_{13}^{F}$] (q13f) at  (9,0.5) {$0/2 \epsilon$};
\node[vertex,label=right:$q_{14}^{F}$] (q14f) at  (12,0.5) {$0/2 \epsilon$};
\node[vertex,label=above:$q_{21}^{T}$] (q21t) at  (3,-1.5) {$0/2 \epsilon$};
\node[vertex,label=above:$q_{22}^{T}$] (q22t) at  (6,-1.5) {$0/2 \epsilon$};
\node[vertex,label=above:$q_{23}^{T}$] (q23t) at  (9,-1.5) {$0/2 \epsilon$};
\node[vertex, scale = 0.8, label=right:$q_{24}^{T}$] (q24t) at  (12,-1.5) {$0/ 1-2 \epsilon$};
\node[vertex,label=above:$q_{21}^{F}$] (q21f) at (3,-3.5) {$0/2 \epsilon$};
\node[vertex,label=above:$q_{22}^{F}$] (q22f) at  (6,-3.5) {$0/2 \epsilon$};
\node[vertex,label=above:$q_{23}^{F}$] (q23f) at  (9,-3.5) {$0/2 \epsilon$};
\node[vertex,label=right:$q_{24}^{F}$] (q24f) at  (12,-3.5) {$0/ 2 \epsilon$};

\draw[->] (q0) -- (q11t) node[midway, above, rotate=45] {$1: \frac{1}{4} - \frac{\epsilon}{2}$};
\draw[->] (q0) -- (q11f)  node[midway, below,rotate=10] {$0: \frac{1}{'} - \frac{\epsilon}{2}$};
\draw[->] (q0) -- (q21t) node[midway, below, rotate=-25]{$1: \frac{1}{4} - \frac{\epsilon}{2}$} ;
\draw[->] (q0) -- (q21f) node[midway, below, rotate=-45]  {$0: \frac{1}{4} - \frac{\epsilon}{2}$};

\draw[->] (q11t) -- (q12t) node[midway,above] {$0,1: \frac{1}{2} - \epsilon$};
\draw[->] (q12t) -- (q13t) node[midway,above] {$0,1: \frac{1}{2} - \epsilon$};
\draw[->] (q13t) -- (q14t) node[midway,above] {$0,1: \frac{1}{2} - \epsilon$};
\draw[->] (q11f) -- (q12t) node[midway,above, rotate=30] {$1: \frac{1}{2} - \epsilon$};
\draw[->] (q11f) -- (q12f) node[midway,above] {$0: \frac{1}{2} - \epsilon$};
\draw[->] (q12f) -- (q13t) node[midway,above,rotate=30] {$1: \frac{1}{2} - \epsilon$};
\draw[->] (q12f) -- (q13f) node[midway,above] {$0: \frac{1}{2} - \epsilon$};
\draw[->] (q13f) -- (q14t) node[midway,above,rotate=30] {$1: \frac{1}{2} - \epsilon$};
\draw[->] (q13f) -- (q14f) node[midway,above] {$0: \frac{1}{2} - \epsilon$};
\draw[->] (q14t) -- (q14f) node[midway,left] {$0,1: \epsilon$};
\path (q14f) edge [out=330,in=300,looseness=8] node[right] {$0,1: \frac{1}{2} - \epsilon$} (q14f);

\draw[->] (q21t) -- (q22t) node[midway,above] {$0: \frac{1}{2} - \epsilon$};
\draw[->] (q21t) -- (q22f) node[midway,above,rotate=-30] {$1: \frac{1}{2} - \epsilon$};
\draw[->] (q22t) -- (q23t) node[midway,above] {$0,1: \frac{1}{2} - \epsilon$};
\draw[->] (q23t) -- (q24t) node[midway,above] {$0,1: \frac{1}{2} - \epsilon$};

\draw[->] (q21f) -- (q22f) node[midway,above] {$1: \frac{1}{2} - \epsilon$};
\draw[->] (q21f) -- (q22t)  node[midway,above,rotate=30] {$0: \frac{1}{2} - \epsilon$};
\draw[->] (q22f) -- (q23f) node[midway,above] {$0: \frac{1}{2} - \epsilon$};
\draw[->] (q22f) -- (q23t) node[midway,above,rotate=30] {$1: \frac{1}{2} - \epsilon$};
\draw[->] (q23f) -- (q24f) node[midway,above] {$0: \frac{1}{2} - \epsilon$};
\draw[->] (q23f) -- (q24t) node[midway,above,rotate=30]{$1: \frac{1}{2} - \epsilon$};
\draw[->] (q24t) -- (q24f) node[midway,left] {$0,1: \epsilon$};
\path (q24f) edge [out=330,in=300,looseness=8] node[right] {$0,1: \frac{1}{2} - \epsilon$} (q24f);
\end{tikzpicture}
\end{center}
\caption{A graphical representation of the PFA constructed from $\F = (x_{1} \lor x_{2} \lor x_{3}) \wedge (\bar{x}_{2} \lor x_{3} \lor x_{4})$}
\end{figure*}
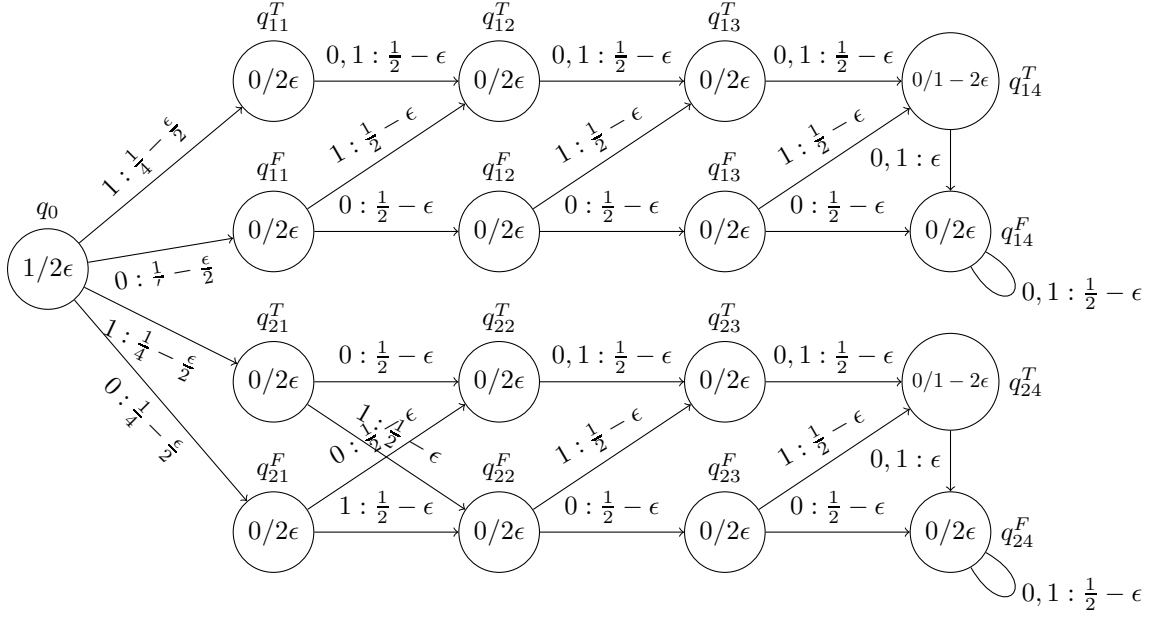
$\bullet$ \textbf{Construction of a RNN:} The RNN $R$ we'll construct is simple, and It generates the quantitative language $R(w) = 2(\frac{1}{2} - \epsilon)^{|w|} \epsilon$. More formally, our RNN is defined as: 
\begin{itemize}
 \item N = 2 (2 hidden neurons),
 \item $\begin{pmatrix} h_{n_{1}}^{(0)} \\ h_{n_{2}}^{(0)} \end{pmatrix} = \begin{pmatrix} 0 \\ 0 \end{pmatrix}$ 
 \item Transition matrices: $W_{in} = \begin{pmatrix} 1 & 0 \\ 0 & 1 \end{pmatrix}$; $W_{0} = W_{1} = W_{\$} =  \begin{pmatrix} 0 \\ 0 \end{pmatrix}$
 \item Output matrices: $O = \begin{pmatrix} 1 & 0 \\ 1 & 0 \\ 0 & 1 \end{pmatrix}$, $O' = \begin{pmatrix} \log_{2}{\frac{1-2 \epsilon}{4 \epsilon}} \\ \log_{2}{\frac{1-2 \epsilon}{4 \epsilon}} \\ 0 \end{pmatrix}$ where $\log_{2}(.)$ is the logarithm to the base 2 
 \end{itemize} 
 What's left is to show that $R(w)$ = $2(\frac{1}{2}$ - $\epsilon)^{|w|}$ defines a consistent language model:
 \begin{proposition}
  For any $\epsilon < \frac{1}{2}$, the weighted language model defined as $f(w) = 2 (\frac{1}{2} - \epsilon)^{|w|} \epsilon$ is consistent.
 \end{proposition}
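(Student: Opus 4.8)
The plan is to verify the two defining conditions of consistency directly: nonnegativity of $f$ on every word, and that the total mass $\sum_{w\in\Sigma^{*}} f(w)$ equals $1$. Nonnegativity is immediate: since $\epsilon\in(0,\tfrac12)$ we have $\tfrac12-\epsilon>0$, hence $2(\tfrac12-\epsilon)^{|w|}\epsilon>0$ for every $w$, so in fact $f(w)>0$ for all $w\in\Sigma^{*}$. This disposes of the first condition with no real work.

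For the normalization condition I would exploit that $f(w)$ depends on $w$ only through its length, so the sum collapses to a one-dimensional series. With $\Sigma=\{0,1\}$ we have $|\Sigma^{n}|=2^{n}$, and therefore
\[
\sum_{w\in\Sigma^{*}} f(w) \;=\; \sum_{n=0}^{\infty} 2^{n}\cdot 2\Bigl(\tfrac12-\epsilon\Bigr)^{n}\epsilon \;=\; 2\epsilon\sum_{n=0}^{\infty}\bigl(1-2\epsilon\bigr)^{n}.
\]
Since $\epsilon\in(0,\tfrac12)$ gives $0<1-2\epsilon<1$, the geometric series converges to $\frac{1}{1-(1-2\epsilon)}=\frac{1}{2\epsilon}$, so the total mass is $2\epsilon\cdot\frac{1}{2\epsilon}=1$, as required. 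I would also remark that the same computation shows the partial sums $\sum_{n\le T}\sum_{w\in\Sigma^{n}}f(w)$ converge to $1$, which is what makes $f$ a genuine probability distribution over $\Sigma^{*}$ rather than merely a finitely additive one.

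There is essentially no obstacle here: the only point that deserves a sentence of care is checking that the ratio $2(\tfrac12-\epsilon)=1-2\epsilon$ of the geometric series lies strictly in $(0,1)$, which is exactly where the hypothesis $\epsilon<\tfrac12$ (together with $\epsilon>0$) is used; without it the series would diverge and $f$ would fail to be consistent. I would close by noting, for later use in the NP-hardness argument, that this $f$ is precisely the weighted language computed by the two-neuron RNN $R$ constructed above, so $R$ is a consistent RNN-LM and is a legitimate instance of the approximation problems under study.
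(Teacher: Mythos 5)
Your proof is correct and follows essentially the same route as the paper: group the sum over $\Sigma^{*}$ by string length, use $|\Sigma^{n}|=2^{n}$ to collapse $2^{n}(\tfrac12-\epsilon)^{n}$ into $(1-2\epsilon)^{n}$, and sum the geometric series to get total mass $1$. Your explicit check of nonnegativity and of the convergence condition $0<1-2\epsilon<1$ is a small (and welcome) addition that the paper leaves implicit.
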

 \begin{proof}
 We have:
 \begin{equation}
 \begin{aligned}
     \sum\limits_{w \in \Sigma^{*}} f(w) &= 2\epsilon \sum\limits_{n \in \mathbb{N}} \sum\limits_{w: |w| = n} (\frac{1}{2} - \epsilon)^{n} \notag \\
     &= 2 \epsilon \sum\limits_{n \in \mathbb{N}} (1-2 \epsilon)^{n} \notag \\
 \end{aligned}
  \end{equation}
 By applying the equality: $\sum\limits_{n \in \mathbb{N}} x^{n} = \frac{1}{1 - x}$ for any $|x| < 1$ on the sum present in the right-hand term of the equation above, we obtain the result.
 \end{proof}

\begin{proposition}
Let $\F$ be an arbitrary 3-SAT formula with $n$ variables and $k$ clauses. Let $\mathcal{A}$ be the PFA constructed from $\F$ by the procedure detailed above, the probabilistic language generated by $\mathcal{A}$ is given as: 
\begin{equation*}
    \mathcal{A}(w) =  \begin{cases} 
         2 (\frac{1}{2} - \epsilon)^{|w|} \epsilon & if ~ |w| < n\\
          2(\frac{1}{2} - \epsilon)^{|w|} \epsilon [\frac{N_{w}}{k}\frac{1-2\epsilon}{2\epsilon} + \frac{k-N_{w}}{k} ] & if ~ |w| = n \\ 
         2(\frac{1}{2} - \epsilon)^{|w|} \epsilon [\frac{N_{w_{:n}}}{k} \frac{2\epsilon}{1 - 2\epsilon}+ \frac{k-N_{w_{:n}}}{k}] & else\\
      \end{cases}
\end{equation*}
\end{proposition}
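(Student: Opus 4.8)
The plan is to exploit the rigidly layered shape of $\mathcal{A}$. Apart from $q_{0}$, the state set splits into $k$ disjoint ``chains'', the $i$-th being $\{q_{i,j}^{T},q_{i,j}^{F}:j\in[1,n]\}$; inside a chain every state has exactly one outgoing transition per symbol, and reading the first symbol from $q_{0}$ sends the run to exactly one entry state of each chain (namely $q_{i,1}^{T}$ if $w_{1}$ satisfies the literal of $C_{i}$ on $x_{1}$, and $q_{i,1}^{F}$ otherwise) with weight $\tfrac{1}{k}(\tfrac12-\epsilon)$. Hence, for every $w$, $\mathcal{A}$ has exactly $k$ accepting paths, one per clause, and $\mathcal{A}(w)=\sum_{i=1}^{k}(\text{weight of the path through chain }i)$. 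The proof then reduces to: (i) locating the state reached along chain $i$ after reading $w$, and (ii) evaluating the corresponding path weight as a telescoping product of edge weights times a final weight.

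For (i) I would prove by induction on $\ell\le n$ the invariant: after reading $w_{:\ell}$, chain $i$ is in $q_{i,\ell}^{T}$ iff the partial assignment $(x_{1},\dots,x_{\ell})=(w_{1},\dots,w_{\ell})$ already satisfies a literal of $C_{i}$, and in $q_{i,\ell}^{F}$ otherwise. The base case is the description of the edges out of $q_{0}$; the inductive step is a case check against the transition rules, using that the $T$-row is absorbing and that the $F$-row jumps to the $T$-row exactly when bit $j$ satisfies the literal of $C_{i}$ on $x_{j}$ (when $C_{i}$ has such a literal). Taking $\ell=n$ gives: chain $i$ ends the reading phase in $q_{i,n}^{T}$ iff $w_{:n}$ satisfies $C_{i}$; hence exactly $N_{w_{:n}}$ of the $k$ chains are then in a $T$-state and $k-N_{w_{:n}}$ in an $F$-state, which is $N_{w}$ when $|w|=n$.

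For (ii) I would split on $|w|$. If $|w|<n$ (the case $|w|=0$ being immediate since $P(q_{0})=2\epsilon$), every chain contributes $\tfrac{1}{k}(\tfrac12-\epsilon)\cdot(\tfrac12-\epsilon)^{|w|-1}\cdot 2\epsilon$, since each transition after the first has weight $\tfrac12-\epsilon$ and every level-$|w|$ state has final weight $2\epsilon$; summing over the $k$ chains gives $2\epsilon(\tfrac12-\epsilon)^{|w|}$. If $|w|=n$, the only change is that the $N_{w}$ satisfied chains now stop in $q_{i,n}^{T}$ with final weight $1-2\epsilon$, the rest in an $F$-state with final weight $2\epsilon$; collecting the two groups and factoring out $2\epsilon(\tfrac12-\epsilon)^{n}$ gives the middle case. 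If $|w|>n$, each chain first accumulates $\tfrac{1}{k}(\tfrac12-\epsilon)^{n}$ over $w_{:n}$ and then, over $w_{n+1}\cdots w_{|w|}$, either (a) a satisfied chain takes the $\epsilon$-weighted edge $q_{i,n}^{T}\to q_{i,n}^{F}$, then $|w|-n-1$ self-loops of weight $\tfrac12-\epsilon$, then final weight $2\epsilon$, or (b) an unsatisfied chain takes $|w|-n$ self-loops of weight $\tfrac12-\epsilon$, then final weight $2\epsilon$; adding the $N_{w_{:n}}$ contributions of type (a) to the $k-N_{w_{:n}}$ of type (b), factoring out $2\epsilon(\tfrac12-\epsilon)^{|w|}$ and using $\tfrac{\epsilon}{\tfrac12-\epsilon}=\tfrac{2\epsilon}{1-2\epsilon}$, yields the last case.

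The arithmetic in (ii) is mechanical once the transition data is fixed, so I expect the main obstacle to be fixing that data: the textual description and Figure~1 must be cross-checked to settle the exact weights and out-degrees — that from $q_{0}$ only one of $q_{i,1}^{T},q_{i,1}^{F}$ is the target of a given symbol (with weight $\tfrac{1}{k}(\tfrac12-\epsilon)$), that the ``drop'' edge $q_{i,n}^{T}\to q_{i,n}^{F}$ has weight $\epsilon$, that the satisfied end-state $q_{i,n}^{T}$ has final weight $1-2\epsilon$ while the remaining level-$n$ states have $2\epsilon$, that the $q_{i,n}^{F}$ self-loop and all other edges have weight $\tfrac12-\epsilon$, and that the $S/N$ labels are identified with $T/F$ consistently — and then to keep straight the off-by-one at $\ell=n$, where the reading phase ends and the chain's absorbing phase begins. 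That reconciliation, together with the induction in (i), carries essentially all the content of the proof.
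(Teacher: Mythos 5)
The paper states this proposition without any proof, so there is nothing to compare your argument against; judged on its own, your proposal is correct and in effect supplies the missing proof. The decomposition of $\mathcal{A}(w)$ into $k$ deterministic per-clause paths, the invariant that chain $i$ sits in the $T$-row after $\ell$ symbols exactly when $w_{:\ell}$ already satisfies $C_{i}$, and the three weight computations (in particular the factor $\epsilon/(\tfrac12-\epsilon)=\tfrac{2\epsilon}{1-2\epsilon}$ picked up by the $N_{w_{:n}}$ satisfied chains through the drop edge $q_{i,n}^{T}\to q_{i,n}^{F}$) all check out against the stated formula. Your closing caveat is the right one and deserves to be made explicit rather than parenthetical: the transition table as printed is not self-consistent --- the edges out of $q_{0}$ as listed violate normalization (only one of $q_{i,1}^{T},q_{i,1}^{F}$ can be the target per symbol, as in Figure~1), the $S/N$ superscripts are typos for $T/F$, and the ``otherwise'' clause of the level-$1$-to-$2$ step fails to test $x_{2}$ for clauses not mentioning $x_{1}$ --- so your invariant in (i) holds for the repaired construction, not for the text verbatim. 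Fortunately the asserted formula is insensitive to how the repair is carried out at levels below $n$, since every state there has final weight $2\epsilon$; all that matters is that chain $i$ reaches $q_{i,n}^{T}$ iff $w_{:n}$ satisfies $C_{i}$, which is the construction's evident intent and is what your induction establishes.
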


\begin{proposition}
 For any rational number $\epsilon < \frac{1}{4}$, there exists a rational number $c_{\epsilon}$ such that $\F$ is satisfiable if and only if $d_{\infty}(R, \mathcal{A}) > c$
\end{proposition}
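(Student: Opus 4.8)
The plan is to use the closed forms for $R$ and $\mathcal{A}$ established in the two preceding propositions to compute $d_\infty(R,\mathcal{A})$ exactly, and then to choose $c_\epsilon$ strictly between the value this distance takes when $\F$ is satisfiable and the value it takes otherwise. First I would subtract the two models in the three regimes of the formula for $\mathcal{A}(w)$: for $|w|<n$ they coincide, so the pointwise gap is $0$; for $|w|=n$ one gets $\mathcal{A}(w)-R(w)=(\frac{1}{2}-\epsilon)^{|w|}(1-4\epsilon)\frac{N_w}{k}$, which is nonnegative precisely because $\epsilon<\frac{1}{4}$; and for $|w|>n$ one gets $R(w)-\mathcal{A}(w)=2(\frac{1}{2}-\epsilon)^{|w|}\epsilon\,\frac{N_{w_{:n}}}{k}\cdot\frac{1-4\epsilon}{1-2\epsilon}\geq 0$. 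Thus in every case $|R(w)-\mathcal{A}(w)|$ equals one of these nonnegative quantities.

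Next I would locate the maximum over all $w$. On the tail regime $|w|>n$ the factor $(\frac{1}{2}-\epsilon)^{|w|}$ strictly decreases in $|w|$ (as $0<\frac{1}{2}-\epsilon<1$), so the supremum there is attained at $|w|=n+1$; for a fixed length the gap increases with $N_{w_{:n}}$ (resp.\ $N_w$). Writing $M:=\max_{u\in\{0,1\}^{n}}N_u$, the maximum over length $n$ equals $(\frac{1}{2}-\epsilon)^{n}(1-4\epsilon)\frac{M}{k}$, while the maximum over length $n+1$ equals $2(\frac{1}{2}-\epsilon)^{n+1}\epsilon\,\frac{M}{k}\cdot\frac{1-4\epsilon}{1-2\epsilon}$; using $2(\frac{1}{2}-\epsilon)=1-2\epsilon$, their ratio is exactly $\frac{1}{2\epsilon}$, which is $>2$ since $\epsilon<\frac{1}{4}$. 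Hence the pointwise maximum is attained at length exactly $n$, and $d_\infty(R,\mathcal{A})=(\frac{1}{2}-\epsilon)^{n}(1-4\epsilon)\frac{M}{k}$.

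Finally, since $N_w=k$ exactly when $w$ satisfies all $k$ clauses, $\F$ is satisfiable iff $M=k$, i.e.\ iff $d_\infty(R,\mathcal{A})=(\frac{1}{2}-\epsilon)^{n}(1-4\epsilon)$; and if $\F$ is unsatisfiable then $M\le k-1$, so $d_\infty(R,\mathcal{A})\le(\frac{1}{2}-\epsilon)^{n}(1-4\epsilon)\frac{k-1}{k}$. Taking $c_\epsilon:=(\frac{1}{2}-\epsilon)^{n}(1-4\epsilon)\cdot\frac{2k-1}{2k}$, which is rational because $\epsilon\in\mathbb{Q}$ and $n,k\in\mathbb{N}$, separates the two values strictly, yielding ``$\F$ satisfiable $\iff d_\infty(R,\mathcal{A})>c_\epsilon$.''

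The delicate step is the second one: ruling out that the pointwise maximum of $|R(w)-\mathcal{A}(w)|$ is attained in the tail $|w|>n$ rather than at length exactly $n$. This is exactly where $\epsilon<\frac{1}{4}$ is used — it fixes the signs of the gaps (so no cancellation disturbs the $\max$) and forces the contraction ratio $\frac{1}{2\epsilon}$ to exceed $1$, so the geometric decay $(\frac{1}{2}-\epsilon)^{|w|}$ dominates the tail; moreover, since this factor tends to $0$, only finitely many strings can have gap above any threshold, so the supremum is indeed a maximum. Note also that $\epsilon<\frac{1}{4}$ entails $\epsilon<\frac{1}{2}$, so the earlier consistency proposition applies and $R$ and $\mathcal{A}$ are genuine probabilistic language models.
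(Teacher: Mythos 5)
Your proof is correct and follows essentially the same route as the paper's: compute the pointwise gap $|R(w)-\mathcal{A}(w)|$ in the three length regimes, show the maximum is attained at $|w|=n$ (where it equals $(\frac{1}{2}-\epsilon)^{n}(1-4\epsilon)\max_{w\in\Sigma^{n}}N_{w}/k$), and place $c_{\epsilon}$ strictly between the values corresponding to $N_{w}=k$ and $N_{w}\leq k-1$; your $c_{\epsilon}$ is exactly the paper's choice with $s=k-\tfrac{1}{2}$. One inconsequential slip: the ratio between the length-$n$ and length-$(n+1)$ maxima is $\tfrac{1}{\epsilon}$ rather than $\tfrac{1}{2\epsilon}$, but either way it exceeds $1$ for $\epsilon<\tfrac{1}{4}$, which is all the argument needs.
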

\begin{proof}
 For any $w$ such that $|w| < n$, $|R(w) - \mathcal{A}(w)| = 0$ . \\
 For $|w| = n$, we have: 
 $$ |R(w) - \mathcal{A}(w)| = 2 \epsilon (\frac{1}{2} - \epsilon)^{n} \frac{N_{w}}{k} (\frac{1 - 4 \epsilon}{2 \epsilon}) $$
On the other hand, for $|w| > n$, we have: 
$$ |R(w) - \mathcal{A}(w)| =  2 \epsilon (\frac{1}{2} - \epsilon)^{|w|} \frac{N_{w}}{k} \frac{1 - 4 \epsilon}{1 - 2 \epsilon}$$
Note that we have for any $\epsilon < \frac{1}{4}$:
$$ \forall w \in \Sigma^{\geq n}:~~ |R(w) - \mathcal{A}(w)| \leq  |R(w_{:n}) - \mathcal{A}(w_{:n})| $$
 This means that, under this construction, the maximum is reached necessarily by a string whose length is exactly equal to $n$. Thus, we obtain:
$$d_{\infty}(R,\mathcal{A}) = 2\frac{\epsilon}{k} (\frac{1}{2} - \epsilon)^{n} \frac{1 - 4 \epsilon}{2 \epsilon} \max\limits_{w \in \Sigma^{n}} N_{w}$$

Note that $\F$ is satisfiable if and only if $\max\limits_{w \in \Sigma^{n}} N_{w} = k$. As a result, pick any $s \in [k-1,k)$, and define $c_{epsilon} =  2 \frac{\epsilon s}{k} (\frac{1}{2} - \epsilon)^{n} \frac{1 - 4 \epsilon}{2 \epsilon}$, the formula is satisfiable if and only if $d_{\infty}(R,\mathcal{A}) > c$. 
\end{proof}

\begin{theorem}
 The Tchebychev distance approximation problem between consistent RNN-LMs and PFAs in finite support is NP-Hard.
\end{theorem}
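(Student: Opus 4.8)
The plan is to turn the three preceding propositions into a polynomial-time many-one reduction from 3-SAT. Given a 3-SAT formula $\F$ with $n$ variables and $k$ clauses, I would first fix the free parameter to a convenient rational value, say $\epsilon=\tfrac{1}{10}$. This choice satisfies $\epsilon<\tfrac14$ (so the preceding proposition applies) and, crucially, makes $\tfrac{1-2\epsilon}{4\epsilon}=2$ a power of two, so the output bias of the RNN $R$ is $O'=(1,1,0)^{\top}$, i.e. rational with constant bit-length. With $\epsilon$ fixed, $R$ has two neurons and $O(1)$-size rational parameters, while the PFA $\mathcal{A}$ has $O(nk)$ states and transition/halting weights (namely $\tfrac{1}{2k}-\tfrac{\epsilon}{k}$, $\epsilon$, $\tfrac12-\epsilon$, $2\epsilon$, $1-2\epsilon$) all of bit-length polynomial in $|\F|$; the consistency of both objects is guaranteed by Proposition (consistency of $f$) and Proposition (the closed form for $\mathcal{A}(w)$). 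Hence $R$ and $\mathcal{A}$ are produced in polynomial time, and for $n$ large enough the construction of $\mathcal{A}$ is non-degenerate (instances with $n$ or $k$ bounded are decided directly).

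Second, I would set the support bound to $N:=n$ and take the threshold $c:=c_{\epsilon}$ from the last proposition, choosing e.g. $s=k-\tfrac12\in[k-1,k)$, so that $c=2\tfrac{\epsilon s}{k}(\tfrac12-\epsilon)^{n}\tfrac{1-4\epsilon}{2\epsilon}$ is rational of polynomial bit-length. By the computation in that proposition, $|R(w)-\mathcal{A}(w)|=0$ whenever $|w|<n$, and $|R(w)-\mathcal{A}(w)|\le|R(w_{:n})-\mathcal{A}(w_{:n})|$ whenever $|w|\ge n$ (this inequality is exactly where $\epsilon<\tfrac14$ is used), so the supremum defining $d_{\infty}(R,\mathcal{A})$ is attained by a string of length exactly $n\le N$. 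Consequently $\max_{|w|\le N}|R(w)-\mathcal{A}(w)|=d_{\infty}(R,\mathcal{A})=2\tfrac{\epsilon}{k}(\tfrac12-\epsilon)^{n}\tfrac{1-4\epsilon}{2\epsilon}\max_{w\in\Sigma^{n}}N_{w}$, and the proposition yields: there exists $w$ with $|w|\le N$ and $|R(w)-\mathcal{A}(w)|>c$ $\iff$ $\max_{w\in\Sigma^{n}}N_{w}=k$ $\iff$ $\F$ is satisfiable. Since 3-SAT is NP-hard, the finite-support Tchebychev approximation problem is NP-hard.

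The main obstacle here is not structural but bookkeeping: one must check carefully that $\epsilon$, the logarithmic entry of $O'$, and the threshold $c$ are all \emph{rational} with polynomially bounded descriptions, and that $N=n$ is a legitimate support bound — the latter relying entirely on the "maximum attained at length $n$" fact established in the last proposition, itself a consequence of $\epsilon<\tfrac14$. The genuinely substantive content (that $\mathcal{A}$ computes the claimed weighted language, that the per-length difference is maximized at $N_w=k$) is delegated to the already-stated propositions. Finally, I would remark that the RNN $R$ only ever realizes the length-only weighted language $w\mapsto 2(\tfrac12-\epsilon)^{|w|}\epsilon$; any recurrent architecture capable of scaling its output by a fixed constant at each step — Elman RNNs, GRUs, LSTMs, and their finite-precision variants once $\epsilon$ is dyadic — can implement it, so the NP-hardness transfers verbatim to these models.
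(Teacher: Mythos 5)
Your proposal is correct and follows essentially the same route as the paper: a reduction from 3-SAT using the previously constructed PFA $\mathcal{A}$ and memoryless RNN $R$, invoking the final proposition with a rational $\epsilon<\frac{1}{4}$ and threshold $c_{\epsilon}$ (the paper takes $N=n+1$ rather than $N=n$, which is immaterial since the maximum is attained at length exactly $n$). Your extra bookkeeping on rationality, bit-lengths, and polynomial-time constructibility only makes explicit what the paper leaves implicit.
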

\begin{proof}
 We reduce the 3-SAT satisfiability problem to our problem. Let $\F$ be an arbitrary 3-SAT formula. Construct a PFA $\mathcal{A}$ and a RNN $R$ as specified previously. Choose a rational number $\epsilon < \frac{1}{4}$. Let $c_{\epsilon}>0$ be any rational number as specified in the proof of proposition 5.4, and $N = n+1$. By proposition 5.4, $\F$ is satisfiable if and only if $d_{\infty}(R,\mathcal{A}) > c$, which completes the proof
\end{proof}

\textbf{Remarks:} 
\begin{itemize}
    \item \textbf{NP-Hardness for LSTMs/GRUS:} Although our main focus in this article was on first-order weighted RNNs with one hidden layer, It is worth noting the the NP-Hardness reduction technique from 3-SAT problem we employed can easily be generalized to the case of LSTMs [], and GRUs, two widely used RNN architectures in practice. Indeed, our reduction relies on the construction of a \textit{memoryless} first-order RNN which makes abstraction of the state of the hidden units of the network, and exploits only the output bias vector $O'$. Hence, provided we have first-order output function for a LSTM (or GRU) architecture, the NP-Hardness result demonstrated above is easy to extend our proof to these architectures. 
    \item \textbf{Finite precision RNNs:} As said earlier in section II, a new line of work considered the analysis of the computational power of RNNs with bounded resources, which is a realistic condition in practice\cite{Merrill19}\cite{Weiss18a}. Broadly speaking, a finite-precision RNN is one whose weights and values of its hidden units are stored using a finite number of bits (See \cite{Korsky19} for further details). Under our construction, the same remark raised above about LSTMs/GRUs can be applied to RNNs with finite precision. In fact, It's easy to notice that, with a judicious choice of $\epsilon$, say $\frac{1}{10}$ (in which case $O'[0]=O'[1]=2$), the \textit{toy} memoryless RNN we constructed in the proof requires only 2 bits to encode a hidden unit and a weight value of the network. This shows that even approximating the Tcheybetchev distance in finite support between a language represented by PFA and that of a finite-precision first-order RNN with any computable activation function is also NP-Hard. 
\end{itemize}

\section{Conclusion and perspectives}
In this article, we investigated some computational problems related to the issue of approximating trained RNN language models by different classes of finite state automata. We proved that the equivalence problem of PDFAs/PFAs/WFAs and general weighted first-order RNN-LM with ReLu activation function with a single hidden layer is generally undecidable, and, as a result, trying to calculate any distance between them can't be computed. When restricting RNN-LMs to be consistent, we proved that approximating the Tcheybetchev distance between consistent RNN-LMs with general computable activation functions and PFAs is decidable, and that the same problem when restricted to a finite support is at least NP-Hard. Moreover, we gave arguments that the reduction strategy from 3-SAT problem we employed to prove this latter result makes this result generalizable to the class of LSTMs/GRUs and finite precision RNNs. \\ 
This work provides first theoretical results of examining equivalence and the quality of approximation problems between automata-based models and RNNs from a computational viewpoint. Yet, there are still many interesting problems on the issue that could motivate future research, such as: Is the equivalence problem between general RNN-LMs and different classes of finite state machines still undecidable when other highly non-linear activation functions (e.g. sigmoid, hyperbolic tangent ..) are used instead of ReLus? Is the equivalence problem between the cut-point language of an RNN-LM and a DFA decidable? If an RNN-LM is trained to recognize a language generated by a regular grammar, can we decide if its cut-point language is indeed regular? etc.   

\footnotesize
\bibliography{cap2020}

\end{document}